\renewcommand{\algocf@captiontext}[2]{#1\algocf@typo. \AlCapFnt{}#2} % text of caption
\def\@algocf@capt@plain{top}
\renewcommand{\algocf@makecaption}[2]{%
  \addtolength{\hsize}{\algomargin}%
  \sbox\@tempboxa{\algocf@captiontext{#1}{#2}}%
  \ifdim\wd\@tempboxa >\hsize%     % if caption is longer than a line
    \hskip .5\algomargin%
    \parbox[t]{\hsize}{\algocf@captiontext{#1}{#2}}% then caption is not centered
  \else%
    \global\@minipagefalse%
    \hbox to\hsize{\box\@tempboxa}% else caption is centered
  \fi%
  \addtolength{\hsize}{-\algomargin}%
}
\newcommand{\argmin}{\operatornamewithlimits{argmin}}
\newcommand{\normone}[1]{\ensuremath{\,\!|\!| #1 | \! |_{1}}}
\newcommand{\normtwo}[1]{\ensuremath{\,\!|\!| #1 | \! |_{2}}}
\newcommand{\norm}[1]{\ensuremath{\,\!|\!| #1 | \! |}}
\newcommand{\inprod}[2]{\ensuremath{\langle #1 , \, #2 \rangle}}
\def\E{\mathbb{E}}
\def\R{\mathbb{R}}
\newcommand{\Rp}{\ensuremath{\R^p}}
\newcommand{\si}{\sum_{i=1}^n} % Normal sum
\newcommand{\Hilbertspace}{\ensuremath{\mathcal H}}
\newcommand{\equationspace}{~~~~}
\newcommand{\data}{\ensuremath{X}}
\newcommand{\datav}{\ensuremath{{\bf x}}}
\newcommand{\datasi}{\ensuremath{{X^i}}}
\newcommand{\Matrix}{{\ensuremath{\mathcal{L}}}}
\renewcommand{\matrix}{\ensuremath{{\mathrm \Lambda}}}
\newcommand{\matrixtrue}{\ensuremath{{\matrix^*}}}
\newcommand{\matrixgeneric}{{\textcolor{black}{\ensuremath{\matrix'}}}}
\newcommand{\normtotal}{\textcolor{black}{\ensuremath{a}}}
\newcommand{\tr}{\ensuremath{\operatorname{tr}}}
\newcommand{\matrixest}{{\textcolor{black}{\ensuremath{{\widehat{\matrix}}}}}}
\newcommand{\measure}{{\textcolor{black}{\ensuremath{\nu}}}}
\newcommand{\tensorindexgeneric}[2]{_{#1,\dots,#2}}
\newcommand{\mypenalty}{\ensuremath{u}}
\newcommand{\dualpenalty}{\ensuremath{\tilde{\mypenalty}}}
\newcommand{\empiricalrisk}{\ensuremath{\widehat{d}}}
\newcommand{\populationrisk}{\ensuremath{d}}
\newcommand{\tuningparameter}{\ensuremath{r}}
\newcommand{\tensordimension}{\ensuremath{p}}
\newcommand{\regressionoutput}{\ensuremath{Y}}
\newcommand{\regressionoutputi}{\ensuremath{\regressionoutput^i}}
\newcommand{\regressioncovariates}{\ensuremath{{\bf z}}}
\newcommand{\regressioncovariatesi}{\ensuremath{\regressioncovariates^i}}
\newcommand{\regressioncovariatesit}{\ensuremath{(\regressioncovariates^i)^\top}}
\newcommand{\regressionnoise}{\ensuremath{E}}
\newcommand{\regressionnoisei}{\regressionnoise^i}
\newcommand{\lmoutput}{\ensuremath{y}}
\newcommand{\lmoutputi}{\ensuremath{\lmoutput^i}}
\newcommand{\lmcovariates}{\ensuremath{{\bf z}}}
\newcommand{\lmcovariatesi}{\ensuremath{\lmcovariates^i}}
\newcommand{\lmcovariatesit}{\ensuremath{(\lmcovariates^i)^\top}}
\newcommand{\lmnoise}{\ensuremath{\varepsilon}}
\newcommand{\lmnoisei}{\lmnoise^i}
\newcommand{\lmcoef}{{\beta}}
\newcommand{\lmcoeftrue}{\lmcoef^*}
\newcommand{\lmcoefest}{\widehat{\lmcoef}}
\newcommand{\lmerror}{\lmoutputi-\lmcovariatesi\lmcoef}
\newcommand{\tensorproduct}[1]{\ensuremath{\times_{#1}}}
\newcommand{\nmean}{\ensuremath{\frac{1}{n}\sum_{i=1}^{n}}}
\newcommand{\sumn}{\ensuremath{\sum_{i=1}^{n}}}
\newcommand{\tensorerror}{\ensuremath{\regressionoutputi-\matrixtrue \tensorproduct{1} \regressioncovariatesi}}
\newcommand{\tensorpredicta}{\ensuremath{\matrixtrue \tensorproduct{1} \regressioncovariatesi-\matrixest \tensorproduct{1} \regressioncovariatesi}}
\newcommand{\tensorpredictaformula}{\ensuremath{\matrixtrue \tensorproduct{1} \regressioncovariatesi-\matrix \tensorproduct{1} \regressioncovariatesi}}
\newcommand{\tensorerrorformula}{\ensuremath{\regressionoutputi-\matrix \tensorproduct{1} \regressioncovariatesi}}
\newcommand{\tensorentrysum}{\ensuremath{\sum_{i_1=1}^{\dimension_1}\dots \sum_{i_p=1}^{\dimension_p}}}
\newcommand{\tensorentryindex}{\ensuremath{i_1, \dots, i_p}}
\newcommand{\arraynorm}[1]{\ensuremath{\norm{#1}^2}}
\newcommand{\matrixdelta}{\ensuremath{( \matrixtrue-\matrixest)}}
\newcommand{\matrixdeltaformula}{\ensuremath{( \matrixtrue-\matrix)}}
\newcommand{\sums}[4]{\ensuremath{\sum_{#1}^{#2}\dots\sum_{#3}^{#4}}}
\newcommand{\sumsnew}[2]{\ensuremath{\sum_{#1_#2=1}^{\dimension_#2}\dots\sum_{#1_\tensordimension=1}^{\dimension_\tensordimension}}}
\newcommand{\Sigmainversequareroot}{\ensuremath{\Sigma^{-1/2}}}
\newcommand{\determinant}[1]{\ensuremath{\operatorname{det}#1}}
\newcommand{\density}[1]{\ensuremath{f_{#1}}}
\newcommand{\dimension}{\ensuremath{b}}
\newcommand{\oneton}[1]{\ensuremath{#1 \in \{1, \dots, n \}}}
\newcommand{\dataoneton}{\ensuremath{\data^1, \dots,\data^n}}
\newcommand{\abs}[1]{\ensuremath{|#1|}}
\newcommand{\gradient}{\ensuremath{\nabla(\populationrisk-\empiricalrisk)_\matrixest}}
\newcommand{\conditiona}{\ensuremath{\populationrisk(\matrix)-\empiricalrisk(\matrix)}}
\newcommand{\conditionaest}{\ensuremath{\populationrisk(\matrixest)-\empiricalrisk(\matrixest)}}
\newcommand{\mycondition}{\ensuremath{\log\density{\matrix}-\E_{\matrixtrue}\log\density{\matrix}}}
\newcommand{\spanMatrix}{\Hilbertspace}
\newcommand{\Wigradient}{\ensuremath{\nabla\mathcal{W}^i(\matrixest)}}
\newcommand{\Wi}{\ensuremath{\mathcal{W}^i}}
\newcommand{\element}[2]{\ensuremath{e^{(#1)}_{#2}}}
\newcommand{\elementinverse}[2]{\ensuremath{\sigma^{(#1)}_{#2}}}
\newcommand{\gmgradient}{\ensuremath{\sumn\big(\E_{\matrixtrue}T(\datasi)-T(\datasi)\big)}}
\begin{document}

\jname{Biometrika}
%% The year, volume, and number are determined on publication
\jyear{2017}
\jvol{xx}
\jnum{x}
%% The \doi{...} and \accessdate commands are used by the production team
%\doi{10.1093/biomet/asm023}
%\accessdate{Advance Access publication on 31 July 2016}

%% These dates are usually set by the production team
%\received{XXX 2017}
%\revised{XXX 2017}

%% The left and right page headers are defined here:
\markboth{R. Zhuang \and J. Lederer}{Maximum Regularized Likelihood Estimators}

%% Here are the title, author names and addresses 

\title{\vspace{-1cm}Maximum Regularized Likelihood Estimators:\\
	A General Prediction Theory and Applications}

\author{Rui Zhuang}
\affil{Department of Biostatistics, University of Washington,\\ 1705 NE Pacific St, Seattle, WA 98195, USA \email{rui2@uw.edu}}

\author{\and Johannes Lederer}
\affil{Department of Mathematics, Ruhr-University Bochum,\\ 44780 Bochum, Germany \email{johannes.lederer@rub.de}}

\maketitle

\begin{abstract}
Maximum regularized likelihood estimators (MRLEs) are arguably the most established class of estimators in high-dimensional statistics. In this paper, we derive guarantees for MRLEs in Kullback-Leibler divergence, a general measure of prediction accuracy. We assume only that the densities have a convex parametrization and that the regularization is definite and positive homogenous. The results thus apply  to a very large variety of models and estimators, such as tensor regression and graphical models with convex and non-convex regularized methods. A main conclusion is that MRLEs are broadly consistent in prediction - regardless of whether restricted eigenvalues or similar conditions hold.
\end{abstract}

\begin{keywords}
Maximum regularized likelihood estimators; Oracle inequalities; Prediction accuracy.
\end{keywords}

\section{Introduction}
\label{sec: intro}
\subsection{Overview}
Maximum regularized likelihood estimators (MRLEs) are widely used in generalized linear regression, tensor response regression, and graphical modeling with high-dimensional data. It is thus of major interest to develop theory for this class of estimators. 

Our specific goal is a general finite sample theory for prediction. Existing results are typically  derived on a case-by-case basis. Moreover, many of these results also invoke restricted eigenvalues-type conditions~\citep[Section 6]{Buhlmann2011statistics}. Such conditions are not only stringent and unverifiable in practice but also unsuitable for prediction. For example, restricted eigenvalue conditions in regression  limit the correlations among the covariates.  However, although correlations can affect the identifiability of the parameters, for prediction, even perfectly collinear covariates do not necessarily have a negative impact; in contrast, collinearity can even be beneficial~\citep{Hebiri13,Dalalyan2017prediction}. We are thus interested in a theory that does not involve additional assumptions and provides bounds for a general class of MRLEs. Besides its abstract value, such a general theory can also provide support for specific examples of MRLEs, such as the recently introduced approaches to tensor regression~\citep{Zhou:2013ec,Li:2013tm,Sun:2016uj}, whose prediction properties have not been fully grasped. 

In this paper, we establish a general oracle inequality in terms of the Kullback-Leibler divergence. Oracle inequalities are a standard way to formulate finite sample bounds in high-dimensional statistics. The Kullback-Leibler divergence is a standard way to quantify prediction accuracies; it applies to any model and yet specializes to well-established and interpretable notions of prediction performance. Our proofs invoke only the convexity of the parametrization and the definiteness and positive homogeneity of the regularizers. This makes the result applicable to a variety of parametric and non-parametric models and allows for a broad class of convex and non-convex regularizers.

The remainder of this paper is organized as follows. We introduce the framework and the general result in Section~\ref{sec: general theory}. We then provide examples in Section~\ref{sec: examples}. We finally conclude with a brief discussion in Section~\ref{sec:discussion}. All proofs are deferred to the Appendices in the supplementary material: proofs for  the main result  to Appendix~A, proofs for the examples to Appendix~B, and proofs for the bounds of the empirical process terms in Appendix~C. In addition, Appendix~D contains notation and properties of tensors.

\subsection{Related Literature}\label{sectionLiteratureReview}
There are two types of oracle inequalities in the literature: so-called ``fast rate bounds'' and so-called  ``slow rate bounds.''  ``Fast rate bounds'' are proportional to the square of the regularization parameter. Many representatives of this type of bounds are found in the literature, such as \citet{Bunea2007sparsity,Raskutti2015convex} for  regression, \citet{Ravikumar2011high} for graphical models, and more generally, \citet{Buhlmann2011statistics,Van2016estimation} and references therein. For example, the corresponding bounds for  lasso prediction are of the form $s\log p/(w^2 n),$ where $s$ is the number of non-zero elements in the true regression vector, $p$ is the number of parameters, $w$ is the restricted eigenvalue, and $n$ is the number of observations. These bounds are typically considered fast,  because they can match minimax rates, see~\cite{Verzelen12} and references therein. However, they rely on sparsity, and more importantly, they invoke restricted eigenvalue-type conditions or concern computationally challenging estimators instead \citep{BTW07a,DT07,RigTsy11,DT12a,DT12b}. Moreover, these eigenvalue-type assumptions are unverifiable and often unrealistic in practice, and even if they hold, the additional factors (such as  $s$ and $1/w^2$ for lasso) can be large. 

%(we refer readers to \cite{vandeGeer:2017ua} for some exact results for the prediction error).

On the other hand, oracle inequalities for prediction have be derived without sparsity or restricted eigenvalue conditions for  lasso-type estimators \citep{greenshtein2004persistence,RigTsy11,MM11,Koltchinskii2011nuclear,Huang:2012ww,chatterjee2013assumptionless,buhlmann2013statistical,chatterjee2014new,Lederer2016oracle,Dalalyan2017prediction}. For example, the corresponding  bounds for lasso prediction are of the form  $\sqrt{\log p/n}\normone{\lmcoeftrue}$, where  $\lmcoeftrue$ is the true regression vector. Such bounds are typically referred to as ``slow rate bounds,'' because on a high level, the rates are $1/\sqrt n$ rather than  $1/n.$ However, there are no questionable assumptions involved, and for regression, it has even been shown  that $1/\sqrt n$ is the optimal rate in the absence of further assumptions~\citep{Foygel2011fast,Zhang2015optimal,Dalalyan2017prediction}. Overall, this means that ``fast rate bounds'' are not necessarily fast and ``slow rate bounds'' are not necessarily slow. To correct the misleading nomenclature, \citet{Lederer2016oracle} suggested replacing the term ``fast rate bound'' with ``sparsity bound' and ``slow rate bound'' with ``penalty bound.'' 

Although some examples of MRLEs have been equipped with assumptionless bounds, many other examples still lack such guarantees (or any prediction guarantees altogether). More generally, a broadly applicable prediction theory for MRLEs is still in need. 

\subsection{Our Contribution}
The contribution of this work is two-fold. First, Theorem~\ref{MAINTHEORY} provides a general prediction guarantee for MRLEs in terms of the Kullback-Leibler loss. Besides  being the first assumptionless bound in such a broad setting, the result  specializes correctly to  known results, such as for lasso, where the corresponding rates have been shown to be optimal up to log-factors. Second, we show that applications of the general theorem to specific examples lead to new guarantees in tensor response regression, generalized linear tensor regression, and graphical modeling. The theory thus also establishes new insights into individual cases of MRLEs.

%%% Local Variables: 
%%% mode: latex
%%% TeX-master: t
%%% End: 

\section{General Theory}
\label{sec: general theory}
In this section, we present the general theory comprising  the model classes, estimators, and the main result. The theory applies to an extremely wide range of data and methods; we discuss many important examples in Section~\ref{sec: examples}. As for the models, we consider random vectors $\data \in \mathcal{X}$ in a non-empty set~$\mathcal{X}$ distributed according to a density $f \in \mathcal{F}$ in a general class~$\mathcal{F}.$ We assume that the densities in $\mathcal{F}$ can be parametrized as $\density{\matrix}$ with parameter $\matrix\in\Matrix$ that belongs to  a convex, non-empty set~$\Matrix$ in a real Hilbert space~\Hilbertspace\ and  $\log\density{\matrix}-\E_{\matrixgeneric}\log\density{\matrix}$ is convex in $\matrix$ for fixed $\matrix'\in\Matrix.$ A classical example for this setup is the case of exponential families in the natural form~\citep[Lemma 2.1]{Berk1972consistency}, see also~\citet{Johansen1979introduction,Brown1986fundamentals}. In general, however, the parametrization can be arbitrary as long as the convexity condition is fulfilled, and the parameter space can well be infinite-dimensional. In view of this very general framework, with the convexity of the parametrization being the only requirement on the models, the following theory applies to a large class of models.

The targets of our study are MRLEs in the described setup. Maximum likelihood estimation is one of the most widely accepted approaches to understand data, and regularization is a standard technique to incorporate additional structure or information. A contemporary playground for MRLEs is high-dimensional statistics, where a tremendous amount of research  centers around regularization based on sparsity structures~\citep{Buhlmann2011statistics,Giraud2014introduction,Hastie2015statistical}.  Given data $\data$, we consider MRLEs of the form (assumed to exist)
\begin{equation}\label{eqn:MRLE}
\widehat{\matrix}  \in  \argmin_{ \matrix \in \Matrix}\big\{ -\log f_\matrix(\data) + \tuningparameter \mypenalty (\matrix) \big\},
\end{equation}
where $\tuningparameter > 0$ is a regularization parameter and $\mypenalty : \Hilbertspace \mapsto [0, \infty]$ is a regularization with properties
\begin{align}
%&\mypenalty(\matrix) \geq 0,\label{assume: positive}\\
&\mypenalty(\matrix) = 0~~\Leftrightarrow~~\matrix  =  0 ,\label{assume:only0}\\
& \mypenalty(t\matrix) = t\mypenalty(\matrix)\equationspace \forall \matrix \neq 0, t \geq 0.\label{assume: positive homogeneous}
\end{align}
These two properties allow us to formulate dual functions that generalize the classical notion of dual  norms and the corresponding H\"older-like inequalities, see the definition of $\dualpenalty$ below and Lemma~A.1  in Appendix~A. Indeed, one can check readily that the properties are met by norms, including the weighted norm penalities considered in~\citet{Zou:2006du,Van2008high,Gramfort2012mixed,Bu:2017tn} and others. However, the properties are also satisfied by the more general concept of gauges, which requires convexity in addition to \eqref{assume:only0} and \eqref{assume: positive homogeneous}, and which has become an increasingly popular  subject of optimization theory~\citep{Friedlander:2016ge,Aravkin:2017ur}. Furthermore,  we allow for non-convex functions: for example, the category of regularizers covers  $\ell_q$-operators, $\ell_q(\matrix):=(\sum_{j=1}^p|\matrix_j|^q)^{1/q}$ for $\matrix\in\Rp,$  even in the non-convex case $q \in (0,1)$; we refer to~\citet{Foucart2009sparsest} for corresponding optimization techniques. More generally, it covers Minkowski functionals $\mypenalty(\matrix):=\inf\{a>0:\matrix\in a\mathcal K\}$ with  level set $\mathcal K$ that is bounded and contains an open set around the origin, but is potentially non-symmetric and non-convex. Altogether, we consider a very general class of estimators.

A standard measure to assess the accuracy of estimators is the Kullback-Leibler divergence~\citep{Huntsberger1981elements}. This measure is particularly suited for our theory, because it can be formulated independently of the  model class at hand and yet specifies to established  measures in  applications. For given $\matrix, \matrixgeneric \in \Matrix$, the Kullback-Leibler divergence from $f_\matrix$ to $f_\matrixgeneric$ is defined as
\begin{equation*}
\populationrisk(\matrix;\matrixgeneric) :  =  \E_{\matrixgeneric}\text{log}\Big(\frac{f_{\matrixgeneric}(\data)}{f_\matrix(\data)}\Big)  .
\end{equation*}
Given data $\data,$ the empirical version of $\populationrisk(\matrix;\matrixgeneric)$ is then
\begin{equation}\label{eq: empirical risk}
\empiricalrisk(\matrix;\matrixgeneric \mid \data) : =   \text{log}\Big(\frac{f_{\matrixgeneric}(\data)}{f_\matrix(\data)}\Big)  .
\end{equation}
For ease of notation, we assume in the following $X\sim f_\matrixtrue$ for the ``true'' parameter $\matrixtrue\in\Matrix$ and set  $\populationrisk(\matrix):=\populationrisk(\matrix; \matrixtrue)$ and $\empiricalrisk(\matrix):=\empiricalrisk(\matrix;\matrixtrue \mid \data)$.

We can now formulate an oracle inequality for the MRLE given in~\eqref{eqn:MRLE}. For this, the function~$\dualpenalty$ at $\matrix\in\Hilbertspace$ is defined as the dual of $\mypenalty$ by
\begin{equation*}
\dualpenalty(\matrix)  := \sup\big\{  \inprod{\matrix}{\matrix'}  \mid \matrix' \in \spanMatrix, \mypenalty(\matrix')  \leq  1 \big\},
\end{equation*}
where $\inprod{\cdot}{\cdot}$ is the inner product on \Hilbertspace. Moreover, $\gradient\in\Hilbertspace$ denotes any subgradient of $\populationrisk(\matrix)-\empiricalrisk(\matrix)$ at $\widehat{\matrix}$. We then find the following.
\begin{theorem}[oracle inequality]\label{MAINTHEORY} 
	For all $\tuningparameter \geq \dualpenalty\big(\gradient\big)$, it holds that 
	\begin{equation*}
		\populationrisk(\widehat{\matrix}) \leq \tuningparameter \mypenalty (\matrixtrue)+\tuningparameter \mypenalty (-\matrixtrue).
	\end{equation*}
\end{theorem}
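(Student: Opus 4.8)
The plan is to run the familiar ``basic inequality plus dual-norm'' argument, but adapted to gauges rather than norms, so that no eigenvalue-type condition enters. First I would rewrite the estimator in terms of the empirical risk: since $-\log\density{\matrix}(\data)=\empiricalrisk(\matrix)-\log\density{\matrixtrue}(\data)$ and the last term does not depend on $\matrix$, the minimizer $\matrixest$ in \eqref{eqn:MRLE} also solves $\argmin_{\matrix\in\Matrix}\{\empiricalrisk(\matrix)+\tuningparameter\mypenalty(\matrix)\}$. Because $\matrixtrue\in\Matrix$ and $\empiricalrisk(\matrixtrue)=0$, comparing the objective at $\matrixest$ with its value at $\matrixtrue$ gives the basic inequality $\empiricalrisk(\matrixest)+\tuningparameter\mypenalty(\matrixest)\le\tuningparameter\mypenalty(\matrixtrue)$. (If $\mypenalty(\matrixtrue)=\infty$ the assertion is vacuous, so one may assume it is finite.)

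Next I would bound the ``stochastic'' gap $\populationrisk(\matrixest)-\empiricalrisk(\matrixest)$. A direct computation shows that $\populationrisk(\matrix)-\empiricalrisk(\matrix)$ equals $\log\density{\matrix}(\data)-\E_{\matrixtrue}\log\density{\matrix}(\data)$ plus a term not depending on $\matrix$; hence, by the convexity hypothesis taken with $\matrixgeneric=\matrixtrue$, the map $\matrix\mapsto\populationrisk(\matrix)-\empiricalrisk(\matrix)$ is convex on $\Matrix$, which is exactly what makes the subgradient $\gradient$ meaningful. Moreover this function vanishes at $\matrixtrue$, since $\populationrisk(\matrixtrue)=0$ and $\empiricalrisk(\matrixtrue)=0$. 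Plugging $\matrixtrue$ into the subgradient inequality for this convex function at $\matrixest$ gives $0\ge\bigl(\populationrisk(\matrixest)-\empiricalrisk(\matrixest)\bigr)+\inprod{\gradient}{\matrixtrue-\matrixest}$, i.e. $\populationrisk(\matrixest)-\empiricalrisk(\matrixest)\le\inprod{\gradient}{\matrixest}+\inprod{\gradient}{-\matrixtrue}$.

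Finally I would combine the two. Writing $\populationrisk(\matrixest)=\empiricalrisk(\matrixest)+\bigl(\populationrisk(\matrixest)-\empiricalrisk(\matrixest)\bigr)$ and inserting the basic inequality together with the bound just derived yields $\populationrisk(\matrixest)\le\tuningparameter\mypenalty(\matrixtrue)-\tuningparameter\mypenalty(\matrixest)+\inprod{\gradient}{\matrixest}+\inprod{\gradient}{-\matrixtrue}$. Now apply the generalized H\"older-type inequality $\inprod{\matrix}{\matrixgeneric}\le\dualpenalty(\matrix)\mypenalty(\matrixgeneric)$ (Lemma~A.1), valid under \eqref{assume:only0} and \eqref{assume: positive homogeneous} alone: for $\matrixgeneric\ne 0$ it follows by feeding $\matrixgeneric/\mypenalty(\matrixgeneric)$, which has penalty $1$ by positive homogeneity, into the definition of $\dualpenalty$. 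Since $\tuningparameter\ge\dualpenalty(\gradient)$ and $\mypenalty\ge 0$, this gives $\inprod{\gradient}{\matrixest}\le\tuningparameter\mypenalty(\matrixest)$ and $\inprod{\gradient}{-\matrixtrue}\le\tuningparameter\mypenalty(-\matrixtrue)$; substituting, the $\pm\tuningparameter\mypenalty(\matrixest)$ terms cancel and we are left with $\populationrisk(\matrixest)\le\tuningparameter\mypenalty(\matrixtrue)+\tuningparameter\mypenalty(-\matrixtrue)$.

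The arithmetic here is short, so the real work sits in the two supporting facts rather than in the theorem itself: establishing that $\populationrisk-\empiricalrisk$ inherits convexity from the model assumption (so that $\gradient$ exists and the subgradient inequality applies), and proving the gauge version of H\"older's inequality for a possibly non-convex and non-symmetric $\mypenalty$ from scratch, since it cannot be quoted from norm duality. The latter is also where the asymmetry of $\mypenalty$ surfaces: one obtains $\tuningparameter\mypenalty(\matrixtrue)+\tuningparameter\mypenalty(-\matrixtrue)$ instead of $2\tuningparameter\mypenalty(\matrixtrue)$, the two coinciding exactly when $\mypenalty$ is symmetric (e.g., a norm). A minor point to watch is that $\matrixtrue\in\Matrix$ is needed so that $\matrixtrue$ is an admissible competitor in the empirical-risk reformulation.
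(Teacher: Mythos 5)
Your proposal is correct and follows essentially the same route as the paper's proof: reformulate the MRLE as a minimizer of the regularized empirical Kullback-Leibler loss, combine the resulting basic inequality with the subgradient inequality for the convex map $\matrix\mapsto\populationrisk(\matrix)-\empiricalrisk(\matrix)$, and finish with the gauge H\"older-type inequality (Lemma~A1) and the condition $\tuningparameter\geq\dualpenalty(\gradient)$. The only cosmetic difference is that you use $\empiricalrisk(\matrixtrue)=0$ up front, whereas the paper carries $\empiricalrisk(\matrixtrue)$ and $\populationrisk(\matrixtrue)$ along and cancels them at the end.
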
 
\noindent The bound has three building blocks. First, the Kullback-Leibler loss is used as a measure of the accuracy of the MRLEs. In many examples, this loss equals a classical prediction loss. Second, the ``noise term'' $\dualpenalty\big(\gradient\big)$ forms a lower bound on the regularization parameter. This term can typically be controlled by using bounds from empirical process theory. Finally, the (symmetrized) size of the true model $\mypenalty (\matrixtrue)+\mypenalty (-\matrixtrue)$ scales the accuracy bounds. The size is measured in terms of \mypenalty, which reflects the rationale for choosing \mypenalty\ in the first place.

Theorem~\ref{MAINTHEORY} is in the form of  an oracle inequality, which is a standard way to capture the performance of regularized estimators~\citep{Buhlmann2011statistics}. Importantly, oracle inequalities provide finite sample guarantees and are thus, as opposed to asymptotic results, of direct relevance in practice. However, the inequality also entails upper bounds on the rates of convergence. Note first that the size of the true model can be considered as a basically constant  factor; indeed, in view of the motivation of regularization being that there is a true model with reasonable  size in $\mypenalty$, largely inflating  values of $\mypenalty(\pm\matrixtrue)$ would indicate an inappropriate choice of the regularization function. As a conclusion, one can derive bounds for the rates of convergence essentially  by looking at the regularization parameter~\tuningparameter.

An immediate question is whether the bounds in Theorem~\ref{MAINTHEORY} are optimal. To answer this question, we first recall  that for specific examples that fit our general framework, ``fast rate'' bounds proportional to $\tuningparameter^2$ rather than $\tuningparameter$ have been derived, but despite the inaccurate nomenclature, their rates are not necessarily fast. In particular, bounds proportional to $\tuningparameter^2$ contain additional factors that can slow down the rates, and more directly for scalable estimators, the known bounds rely on strong additional assumptions. Instead, it has been shown that bounds proportional to $\tuningparameter$ are optimal in lasso-type regression in the absence of further assumptions, which means that the bounds in Theorem~\ref{MAINTHEORY} are  indeed optimal in the sense that they cannot be improved in general --- see Sections~\ref{sectionLiteratureReview} and~\ref{sec:examplesf} for details. 

In summary, Theorem~\ref{MAINTHEORY} provides bounds for a wide range of models and corresponding MRLEs. Therefore, the theorem is an umbrella for bounds linear in $\tuningparameter$ that have been derived for specific examples previously. The proof, however, differs from the previous ones in the way that it uses convexity arguments, H\"older-type inequalities, and connections between the log-likelihood and the Kullback-Leibler loss. Furthermore, and more importantly,  Theorem~\ref{MAINTHEORY} also entails guarantees for models and estimators that have not yet been equipped with assumptionless bounds - or any bounds at all.

%%% Local Variables: 
%%% mode: latex
%%% TeX-master: t
%%% End: 
\section{Examples}
\label{sec: examples}
We now give explicit bounds for  high-dimensional tensor response regression, generalized linear tensor regression, and graphical models. The bounds are the first ones to provide assumptionless Kullback-Leibler guarantees for MRLEs in these models. An exception is linear regression with lasso-type regularization, where assumptionless guarantees have been derived before. We show that we recover the known bounds in this case. 

\subsection{Tensor Response Regression}\label{sec:examplesf}
Our first example is tensor response regression. In a standard notation (see~\citet{Kolda2006multilinear} or our Appendix~D for details), tensor response regression is based on models of the form
\begin{equation*}
\label{model: tensor regression}
\regressionoutputi= \matrixtrue \tensorproduct{1}\regressioncovariatesi+\regressionnoisei \equationspace \big(\oneton{i}\big),
\end{equation*}
where $\regressionoutputi\in\R^{\dimension_2\times\dots\times \dimension_\tensordimension}$ is a  $(p-1)$th order tensor response, $\matrixtrue \in \Matrix \subset \R^{\dimension_1\times \dots \times \dimension_\tensordimension}$ is a $p$th order tensor coefficient, $\regressioncovariatesi\in\R^{1\times \dimension_1}$ is a fixed or random row-vector of covariates, and $\regressionnoisei \in \R^{\dimension_2\times\dots\times \dimension_\tensordimension}$ is random $(p-1)$th  order tensor noise. The operation $\matrixtrue \tensorproduct{1} \regressioncovariatesi$ denotes the mode-$1$ product of $\matrixtrue$ and $\regressioncovariatesi$. 

Our goal is to estimate the predictive structure of the above model. Assuming that the noise tensors $\regressionnoisei$ are mutually independent, the MRLEs in~\eqref{eqn:MRLE} are of the form
\begin{equation*}
\matrixest\in\argmin_{ \matrix \in\Matrix}\big\{ -\si \log\density{\matrix}(\regressionoutputi\mid\regressioncovariatesi)+\tuningparameter\mypenalty(\matrix) \big\}.
\end{equation*}
For computational ease, $\Matrix$ is usually chosen as a set of low-rank tensors~\citep{Rabusseau:2016um,Sun:2016uj}. In any case, if the conditional density $\density{\matrix}(\regressionoutputi\mid\regressioncovariatesi)$ is parametrized such that $\matrix\mapsto\mycondition$ is convex,  we can derive statistical guarantees in conditional Kullback-Leibler loss from Theorem~\ref{MAINTHEORY}. Importantly,  we do not impose any additional restriction on the covariates or the noise; for example, we allow the covariates to be correlated with the noise. Typical regularizers for third order tensors, for example, include the sparsity-inducing regularizer at the entry level $\mypenalty(\matrix):= \sums{i_1=1}{\dimension_1}{i_3=1}{\dimension_3}\abs{\matrix_{i_1i_2i_3}}$, at the fiber level $\mypenalty(\matrix):= \sum_{i_2=1}^{\dimension_2}\sum_{i_3=1}^{\dimension_3}\normtwo{\matrix_{\cdot i_2i_3}}$, and at the slice level $\mypenalty(\matrix):= \sum_{i_3=1}^{\dimension_3}\norm{\matrix_{\cdot \cdot i_3}}_F:=\sum_{i_3=1}^{\dimension_3}\sqrt{\sum_{i_1=1}^{b_1}\sum_{i_2=1}^{b_2}\abs{\matrix_{i_1i_2i_3}}^2}$ and the low-rank inducing regularizer $\mypenalty(\matrix) := \norm{\matrix}_*$ with $\norm{\cdot}_*$ the tensor nuclear norm defined in~\citet{Raskutti2015convex}. Our framework covers all these examples.

For illustration, we consider tensor response regression with zero-mean array normal noise~\citep{Akdemir2011array,Hoff2011separable}, the most widely-used representative of the above model class. The conditional Lebesgue density of $\regressionoutputi$ given by~\cite{Hoff2011separable} is
\begin{equation*}
	\density{\matrix}(\regressionoutputi\mid \regressioncovariatesi)=(2\pi)^{-b/2}\big( \prod_{k=2}^{p}|\Sigma_k|^{-b/(2b_k)} \big)\cdot \exp\big(-\frac{1}{2}\norm{(\tensorerrorformula)\times \Sigma^{-1/2}}^2\big),
      \end{equation*}
where $\norm{\cdot}^2$ is the array norm, $b:=\prod_{j=2}^p{b_j}$, $\Sigma_k=A_kA_k^\top\in\R^{\dimension_k\times \dimension_k}$ with non-singular real matrix $A_k \in \R^{\dimension_k\times \dimension_k}$, $\Sigma^{-1/2}=\{A_2^{-1},\dots,A_\tensordimension^{-1}\}$, and $\times$~denotes the tensor product.
One can check readily that $\log \density{\matrix}(\regressionoutputi\mid\regressioncovariatesi) -\E_\matrixtrue\log \density{\matrix}(\regressionoutputi\mid\regressioncovariatesi)$ is linear in $\matrix$. Hence our theory applies; in particular, Theorem~\ref{MAINTHEORY} specializes to array normal models as follows.
\begin{lemma}[tensor response regression with array normal noise]
	\label{lemma: tensor regression with array normal noise} ~~For all $\tuningparameter \geq \dualpenalty\big(\sum_{i=1}^{n}\big( \regressionnoisei \times \Sigma^{-1}\tensorproduct{1} \regressioncovariatesit  \big)\big)$, where $\Sigma^{-1}=\{\Sigma_2^{-1},\dots,\Sigma_\tensordimension^{-1}\}$, it holds that
	\begin{equation*}
		\populationrisk(\matrixest) \leq \tuningparameter \mypenalty(\matrixtrue)+\tuningparameter \mypenalty(-\matrixtrue),
	\end{equation*}
with Kullback-Leibler loss
	\begin{equation*}
		\populationrisk(\matrixest) =\frac{1}{2}\sum_{i=1}^{n}\norm{( \matrixtrue-\matrixest)\tensorproduct{1}\regressioncovariatesi\times \Sigma^{-1/2}}^2.
	\end{equation*}
\end{lemma}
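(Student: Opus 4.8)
The plan is to obtain Lemma~\ref{lemma: tensor regression with array normal noise} as a direct specialization of Theorem~\ref{MAINTHEORY}: once the Kullback--Leibler loss $\populationrisk(\matrixest)$ and a subgradient $\gradient$ of $\populationrisk-\empiricalrisk$ at $\matrixest$ are identified in closed form for the array-normal model, the claimed bound follows by substitution. The convexity requirement of the general framework is already in place, since $\mycondition$ is linear in $\matrix$ here, as noted before the lemma.

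First I would write out the log-likelihood ratio. Because the noise tensors are independent, the joint conditional density factorizes and $\empiricalrisk(\matrix)=\sumn\log\{\density{\matrixtrue}(\regressionoutputi\mid\regressioncovariatesi)/\density{\matrix}(\regressionoutputi\mid\regressioncovariatesi)\}$. In each summand the model-independent normalizing constants of the array-normal density cancel, and, since $X\sim\density{\matrixtrue}$ means $\regressionoutputi=\matrixtrue\tensorproduct{1}\regressioncovariatesi+\regressionnoisei$, one gets
\begin{equation*}
\log\frac{\density{\matrixtrue}(\regressionoutputi\mid\regressioncovariatesi)}{\density{\matrix}(\regressionoutputi\mid\regressioncovariatesi)}=\tfrac{1}{2}\norm{\big((\matrixtrue-\matrix)\tensorproduct{1}\regressioncovariatesi+\regressionnoisei\big)\times\Sigma^{-1/2}}^2-\tfrac{1}{2}\norm{\regressionnoisei\times\Sigma^{-1/2}}^2 .
\end{equation*}
Expanding the first square through the inner product underlying the array norm splits it into a term quadratic in $\matrixtrue-\matrix$, a cross term $\inprod{\big((\matrixtrue-\matrix)\tensorproduct{1}\regressioncovariatesi\big)\times\Sigma^{-1/2}}{\regressionnoisei\times\Sigma^{-1/2}}$ linear in the noise, and a pure-noise square that cancels the subtracted one. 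Taking $\E_\matrixtrue$ and using that the array-normal noise has mean zero kills the cross term, so $\populationrisk(\matrix)=\tfrac{1}{2}\sumn\norm{\big((\matrixtrue-\matrix)\tensorproduct{1}\regressioncovariatesi\big)\times\Sigma^{-1/2}}^2$, and setting $\matrix=\matrixest$ gives the stated loss.

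Next I would extract the subgradient. Subtracting the empirical from the population risk cancels the two noise-free quadratic pieces and leaves the affine-in-$\matrix$ expression $\populationrisk(\matrix)-\empiricalrisk(\matrix)=-\sumn\inprod{\big((\matrixtrue-\matrix)\tensorproduct{1}\regressioncovariatesi\big)\times\Sigma^{-1/2}}{\regressionnoisei\times\Sigma^{-1/2}}$. Using adjointness of the tensor mode products --- the adjoint of the mode-$1$ product with the row vector $\regressioncovariatesi$ is the mode-$1$ product with the column vector $\regressioncovariatesit$, and the paired applications of $\Sigma^{-1/2}$, one on each argument of the inner product, compose into $\Sigma^{-1}=\{\Sigma_2^{-1},\dots,\Sigma_\tensordimension^{-1}\}$ because $\Sigma_k=A_kA_k^\top$ --- this rewrites as $-\sumn\inprod{\matrixtrue-\matrix}{(\regressionnoisei\times\Sigma^{-1})\tensorproduct{1}\regressioncovariatesit}$. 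Differentiating this affine map in $\matrix$ yields the (unique, hence valid at $\matrixest$) subgradient $\gradient=\sumn(\regressionnoisei\times\Sigma^{-1})\tensorproduct{1}\regressioncovariatesit$, and Theorem~\ref{MAINTHEORY} then gives $\populationrisk(\matrixest)\leq\tuningparameter\mypenalty(\matrixtrue)+\tuningparameter\mypenalty(-\matrixtrue)$ for every $\tuningparameter\geq\dualpenalty(\gradient)$.

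The main obstacle is the tensor bookkeeping in the last step: tracking modes and dimensions so that the mode-$1$ adjoint correctly produces $\regressioncovariatesit$ with the right reshaping of $\regressionnoisei$, and that the two copies of $\Sigma^{-1/2}$ collapse to $\Sigma^{-1}$ (the adjoint of multiplying mode $k$ by $A_k^{-1}$ is multiplying by $(A_k^{-1})^\top$, and the composition is $\Sigma_k^{-1}$). I would carry this out using the tensor identities collected in Appendix~D; everything else is routine algebra once the model equation has been substituted.
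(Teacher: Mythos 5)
Your proposal is correct and follows essentially the same route as the paper: compute the empirical and population Kullback--Leibler losses explicitly (the zero-mean noise killing the cross term), identify the gradient of $\populationrisk-\empiricalrisk$ at $\matrixest$ as $\sumn \regressionnoisei\times\Sigma^{-1}\tensorproduct{1}\regressioncovariatesit$, and substitute into Theorem~\ref{MAINTHEORY}. The only difference is cosmetic: you obtain the gradient by moving the mode products to the other side of the inner product (adjointness, with $(A_k^{-1})^\top A_k^{-1}=\Sigma_k^{-1}$), whereas the paper computes the same quantity via elementwise partial derivatives and then recombines using the mode-product identities of Appendix~D.
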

\noindent This bound entails that MRLEs for tensor response regression with array normal noise are consistent in average conditional Kullback-Leiber loss under minimal assumptions. Our results thus complement the known consistency guarantees, which hold for specific tensor regressions with additional constraints on the covariates~\citep{Raskutti2015convex,Sun:2016uj}. In addition, Lemma~\ref{lemma: tensor regression with array normal noise} elucidates the interpretation of the conditional Kullback-Leibler loss as a prediction loss. 

For an instantiation of the bound, assume $\sum_{i=1}^n(\regressioncovariatesi)_j^2/n=1,$ $j\in\{1,\dots,b_1\},$ and $(\Sigma^{-1}_{k})_{i_k i_k}=h_k^2,$ $h_k>0,$ $k\in\{2,\dots,p\},$ $i_k\in\{1,\dots,b_k\}.$ Consider the sparsity-inducing regularizer at the entry level $\mypenalty(\matrix) := \sums{i_1=1}{b_1}{i_p=1}{b_p}\abs{\matrix_{\tensorindexgeneric{i_1}{i_p}}}.$ Then,  the tuning parameter~$r$ can be calibrated such that with probability at least $1-2\exp(-t^2),$ it holds that
\begin{equation*}
  \populationrisk(\matrixest) \leq 2\big(\prod_{k=2}^{p}h_k\big)\,\sqrt{2n\big(t^2+\log(\prod_{j=1}^{p}b_j)\big)}\,\sums{i_1=1}{b_1}{i_p=1}{b_p}\abs{\matrix^*_{\tensorindexgeneric{i_1}{i_p}}}.
\end{equation*}
This concrete bound follows from Lemma~C1. That lemma and its proof  --- as well as all other technical derivations --- are deferred to the Appendix.

While the above results for tensor regression are novel, assumptionless bounds  for simple regression with lasso-type estimators such as lasso~\citep{Tibshirani1996regression}, group lasso~\citep{Yuan:2006jy}, sparse group lasso~\citep{Simon2013sparse}, and slope estimator~\citep{Bogdan2015slope} have been derived before. Simple linear regression is thus an ideal test case to confirm that our results specialize correctly.  To this end, we first observe that for $p=2$ and $\dimension_2=1$, tensor response regression with array normal noise reduces to  ordinary linear regression of the form
\begin{equation*}
\lmoutputi=\lmcovariatesi \lmcoeftrue + \lmnoisei \equationspace \big(\oneton{i}\big),
\end{equation*}
where $\lmoutputi \in \R$ is a scalar response,  $\lmcovariatesi \in \R^{1\times \dimension_1}$ is a row-vector of covariates,  $\lmcoeftrue \in \R^{\dimension_1}$ is the regression vector, and $\lmnoisei \in \R$ is noise distributed as $\mathcal{N}(0, \sigma^2)$. For $\mypenalty(\lmcoef):=\normone{\lmcoef}:= \sum_{i=1}^{b_1}\abs{\lmcoef_i}$, the MRLE~\eqref{eqn:MRLE} becomes 
 \begin{equation*}
	\lmcoefest\ \in\ \argmin_{\lmcoef \in \R^{b_1}}\big\{\frac{1}{2\sigma^2}\sum_{i=1}^{n}\big(\lmerror\big)^2 + \tuningparameter  \norm{\lmcoef}_1\big\},
\end{equation*}
which, setting~$\tuningparameter =\tuningparameter'/(2\sigma^2)$, can be written in the standard lasso-form
\begin{equation*}
\label{eq: lasso}
	\lmcoefest\ \in\ \argmin_{\lmcoef \in \R^{b_1}}\big\{\sum_{i=1}^{n}\big(\lmerror\big)^2 + \tuningparameter' \norm{\lmcoef}_1\big\}.
\end{equation*}
If $\tuningparameter' \geq 2\norm{\sum_{i=1}^{n}\lmcovariatesit\lmnoisei}_\infty$, Lemma~\ref{lemma: tensor regression with array normal noise} now implies the bound
\begin{equation*}
	\sum_{i=1}^{n}\big(\regressioncovariatesi \beta^*- \regressioncovariatesi\widehat{\beta}\big)^2 \leq 2\tuningparameter' \norm{\beta^*}_1.
\end{equation*}
This result equals the classical penalty bound for lasso prediction, see~\citet[Equation~3]{Hebiri13} for example. It has been shown that these bounds are essentially optimal in the absence of other assumptions~\citep{Foygel2011fast,Zhang2015optimal,Dalalyan2017prediction}. Along the same lines, one can also show that our bounds specify correctly for the other lasso-type estimators mentioned above~\citep[Section 3]{Lederer2016oracle}, and similarly, for trace regression~\citep[Theorem~1]{Koltchinskii2011nuclear}.

\subsection{Generalized Linear Tensor Regression}
Our second example is generalized linear tensor regression. The corresponding models consist of two components~\citep{Zhou:2013ec,Li:2013tm}: an exponential family  distribution and a link function. The exponential family distribution reads
\begin{equation*}
%\label{eq: GLM density}
f(\lmoutputi \mid \theta^i) =\exp\big( \frac{\lmoutputi\theta^i -b(\theta^i)}{\alpha}+c(\lmoutputi,\alpha) \big) \equationspace \big(\oneton{i}\big),
\end{equation*}
where $\lmoutputi \in \R$ is a scalar response, $\theta^i \in \R$ is the natural parameter, $\alpha >0 $ is the overdispersion factor, and $b, c$ are known real-valued functions. The  link function $g: \R \mapsto \R,$ assumed strictly increasing, provides a linear connection between the mean functions $\E(\lmoutputi\mid \theta^i)$ and tensor predictors $\lmcovariatesi \in \R^{\dimension_1\times \dots \times \dimension_\tensordimension}$  according to  
\begin{equation*}%\label{eq: glm mean model}
	g\big(\E(\lmoutputi\mid \theta^i)\big) = \inprod{\matrixtrue}{\lmcovariatesi},
\end{equation*}
where $\matrixtrue \in \Matrix \subset \R^{\dimension_1\times \dots \times \dimension_\tensordimension}$ and $\inprod{\cdot}{\cdot}$ is the tensor inner product. One can check that $b'(\theta^i)=\E(\lmoutputi\mid \theta^i)$. With canonical link $g := (b')^{-1}$, it holds that $\theta^i = \inprod{\matrixtrue}{\lmcovariatesi}$ and the distribution of the response~$\lmoutputi$ conditioned on $\lmcovariatesi$ has density
\begin{equation*}
f_{\matrixtrue}(\lmoutputi\mid \lmcovariatesi) =\exp\big( \frac{\lmoutputi \inprod{\matrixtrue}{\lmcovariatesi} -b(\inprod{\matrixtrue}{\lmcovariatesi})}{\alpha}+c(\lmoutputi,\alpha) \big).
\end{equation*}
Further, by introducing basis functions in the mean models, it is straightforward to extend this parametric setting to non-parametric frameworks. In sum, generalized linear tensor regression provides  very flexible model classes for scalar responses.

We can now turn to the corresponding MRLEs. Given $n$ independent observations $(\lmoutputi, \lmcovariatesi)$ and considering the canonical link, the MRLEs  in~\eqref{eqn:MRLE} become
\begin{equation*}
\matrixest \in \argmin_{ \matrix \in \Matrix}\big\{ \frac{1}{\alpha}\sumn\big( -\lmoutputi\inprod{\matrix}{\lmcovariatesi}  +b(\inprod{\matrix}{\lmcovariatesi})\big)+\tuningparameter\mypenalty(\matrix)  \big\}.
\end{equation*}
Similarly to tensor response regression, optimization over the full set~$\R^{\dimension_1\times \dots \times \dimension_\tensordimension}$ is computationally challenging due to high-dimensionality of the problem. Thus, \Matrix\ is typically chosen considerably smaller, with the belief that the true parameter has some additional structure. For example, a choice proposed in~\citet{Zhou:2013ec} is $\Matrix := \{ \matrix \in \R^{\dimension_1\times \dots \times \dimension_\tensordimension} \mid \matrix =\sum_{i=1}^{m}\beta_1^{(i)}\circ\dots \circ\beta_\tensordimension^{(i)} \}$, where $m$ is a fixed integer, $\beta_j^{(i)}\in \R^{\dimension_j}$,  and $\circ$ denotes the  outer product. 

Let us now apply Theorem~\ref{MAINTHEORY} to equip MRLEs in generalized linear tensor regression with theoretical guarantees. For this, note that the log-parametrization here is again linear, so that the main theorem indeed applies and yields the following results.
\begin{lemma}[generalized linear tensor regression with canonical link]
	\label{lemma: glm}
	For all $\tuningparameter\geq \dualpenalty\big(\frac{1}{\alpha}\sum_{i=1}^{n}\big(\lmoutputi-\E_{\matrixtrue}(\lmoutputi)\big) \lmcovariatesi \big)$, it holds that	
	\begin{equation*}
		\populationrisk(\matrixest) \leq \tuningparameter \mypenalty(\matrixtrue) + \tuningparameter\mypenalty(-\matrixtrue),
	\end{equation*}
with Kullback-Leibler loss
	\begin{equation*}
		\populationrisk(\matrixest)= \frac{1}{\alpha}\sumn\big( g^{-1}(\inprod{\matrixtrue}{\lmcovariatesi})\cdot\inprod{\matrixtrue-\matrixest}{\lmcovariatesi} - b(\inprod{\matrixtrue}{\lmcovariatesi}) + b(\inprod{\matrixest}{\lmcovariatesi}) \big),
	\end{equation*}
and $\E_{\matrixtrue}(\lmoutputi)$ denotes the conditional expectation of $\lmoutputi$ on $\lmcovariatesi$ here.
\end{lemma}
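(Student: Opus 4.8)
The plan is to obtain Lemma~\ref{lemma: glm} as a direct specialization of Theorem~\ref{MAINTHEORY}, so the work is to identify the three building blocks of that theorem in the present model. Write $f_\matrix(\data)=\prod_{i=1}^{n}f_\matrix(\lmoutputi\mid\lmcovariatesi)$ for the data $\data=(\lmoutput^1,\dots,\lmoutput^n)$ given the covariates, and plug in the canonical-link density. The first step is to check the only structural hypothesis of Theorem~\ref{MAINTHEORY}, namely convexity of $\matrix\mapsto\log f_\matrix(\data)-\E_{\matrixtrue}\log f_\matrix(\data)$. Taking the difference of the realized and the expected log-density, the terms $b(\inprod{\matrix}{\lmcovariatesi})$ and the $c(\lmoutputi,\alpha)$-terms that do not depend on $\matrix$ cancel, leaving
\begin{equation*}
\log f_\matrix(\data)-\E_{\matrixtrue}\log f_\matrix(\data)=\frac{1}{\alpha}\sum_{i=1}^{n}\big(\lmoutputi-\E_{\matrixtrue}(\lmoutputi)\big)\inprod{\matrix}{\lmcovariatesi}+\text{const},
\end{equation*}
which is affine, hence convex, in $\matrix$. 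So Theorem~\ref{MAINTHEORY} is applicable.

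Second, I would compute the Kullback-Leibler loss explicitly. By definition, $\populationrisk(\matrixest)=\E_{\matrixtrue}\big[\log f_{\matrixtrue}(\data)-\log f_{\matrixest}(\data)\big]$; substituting the density and using $\E_{\matrixtrue}(\lmoutputi)$ for the conditional mean gives
\begin{equation*}
\populationrisk(\matrixest)=\frac{1}{\alpha}\sum_{i=1}^{n}\big(\E_{\matrixtrue}(\lmoutputi)\inprod{\matrixtrue-\matrixest}{\lmcovariatesi}-b(\inprod{\matrixtrue}{\lmcovariatesi})+b(\inprod{\matrixest}{\lmcovariatesi})\big).
\end{equation*}
Invoking $b'(\theta^i)=\E(\lmoutputi\mid\theta^i)$ together with the canonical link $g=(b')^{-1}$ and $\theta^i=\inprod{\matrixtrue}{\lmcovariatesi}$, one has $\E_{\matrixtrue}(\lmoutputi)=b'(\inprod{\matrixtrue}{\lmcovariatesi})=g^{-1}(\inprod{\matrixtrue}{\lmcovariatesi})$, which turns the last display into the claimed formula for $\populationrisk(\matrixest)$.

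Third, I would pin down the noise term. Since $\empiricalrisk(\matrix)=\log f_{\matrixtrue}(\data)-\log f_\matrix(\data)$, the map $\matrix\mapsto\populationrisk(\matrix)-\empiricalrisk(\matrix)$ equals $\log f_\matrix(\data)-\E_{\matrixtrue}\log f_\matrix(\data)$ up to an additive constant in $\matrix$, so by the first step it is a linear functional of $\matrix$. Being linear, it is differentiable with a unique subgradient at every point, in particular at $\matrixest$; its gradient is the element $\gradient\in\Hilbertspace$ representing $\matrix\mapsto\frac{1}{\alpha}\sum_{i=1}^{n}(\lmoutputi-\E_{\matrixtrue}(\lmoutputi))\inprod{\matrix}{\lmcovariatesi}$, that is, $\gradient=\frac{1}{\alpha}\sum_{i=1}^{n}(\lmoutputi-\E_{\matrixtrue}(\lmoutputi))\lmcovariatesi$. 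Hence the requirement $\tuningparameter\geq\dualpenalty(\gradient)$ of Theorem~\ref{MAINTHEORY} is precisely the condition in the lemma, and the theorem yields $\populationrisk(\matrixest)\leq\tuningparameter\mypenalty(\matrixtrue)+\tuningparameter\mypenalty(-\matrixtrue)$.

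There is no genuine obstacle in this argument; the content is entirely bookkeeping, and it hinges on two points that deserve care. The main one is the cancellation of the $b(\inprod{\matrix}{\lmcovariatesi})$-terms in the first step: it is simultaneously what makes the parametrization convex (indeed linear) and what allows the subgradient to be read off without any smoothness hypothesis, so it is worth spelling out. The second is the substitution $\E_{\matrixtrue}(\lmoutputi)=g^{-1}(\inprod{\matrixtrue}{\lmcovariatesi})$, which uses that $g$ is the canonical link. As in the general setup, no property of the parameter set $\Matrix$ beyond what Theorem~\ref{MAINTHEORY} already assumes enters anywhere.
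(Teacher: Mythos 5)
Your proposal is correct and follows essentially the same route as the paper: specialize Theorem~\ref{MAINTHEORY} by computing the empirical and population Kullback--Leibler losses explicitly (using $\E_{\matrixtrue}(\lmoutputi)=b'(\inprod{\matrixtrue}{\lmcovariatesi})=g^{-1}(\inprod{\matrixtrue}{\lmcovariatesi})$ for the canonical link) and by identifying $\gradient=\frac{1}{\alpha}\sum_{i=1}^{n}\big(\lmoutputi-\E_{\matrixtrue}(\lmoutputi)\big)\lmcovariatesi$ from the linearity of $\matrix\mapsto\populationrisk(\matrix)-\empiricalrisk(\matrix)$. The only cosmetic difference is that you read the gradient off the linear form directly (and spell out the convexity check), whereas the paper first writes out $\populationrisk(\matrixest)-\empiricalrisk(\matrixest)$ and then differentiates; the content is the same.
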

\noindent To the best of our knowledge, this is the first oracle inequality for regularized generalized linear tensor regression. 

As a special case, Lemma~\ref{lemma: glm} applies to ordinary logistic regression, where $\tensordimension=1$, the canonical link is $g(x) = \log\big(x/(1-x)\big)$, and the MRLEs with a general regularizer are in the form of
\begin{equation*}
\matrixest \in \argmin_{ \matrix \in \R^{b_1}}\big\{\sumn\big( -\lmoutputi\inprod{\matrix}{\lmcovariatesi}  +\log\big(1+e^{\inprod{\matrix}{\lmcovariatesi}}\big)\big)+\tuningparameter\mypenalty(\matrix)  \big\}.
\end{equation*}
Lemma~\ref{lemma: glm} then implies the bound
\begin{equation*}
d(\matrixest) \leq \tuningparameter \mypenalty(\matrixtrue) + \tuningparameter\mypenalty(-\matrixtrue)
\end{equation*}
for $\tuningparameter \geq \dualpenalty\big(\si\big(\lmoutputi-e^{\inprod{\matrixtrue}{\lmcovariatesi}}/\big(1+e^{\inprod{\matrixtrue}{\lmcovariatesi}}\big)\big)\lmcovariatesi\big)$. 
For $\ell_1$-regularization, the tuning parameter~$r$ can be calibrated such that with probability at least $1-2\exp(-t^2),$ it holds that
\begin{equation*}
\populationrisk(\matrixest) \leq 2\sqrt{\frac{1+2\max_i\{p^i(1-p^i)\}}{3}\, n(t^2+\log b_1)}\,\sum_{j=1}^{b_1}\abs{\matrix^*_j},
\end{equation*}
where $p^i:=\E_\matrixtrue(\lmoutputi)$. 
We refer to Lemma~C2 in Appendix~C for details.
The bound complements results for (weighted) $\ell_1$-regularized logistic regression that have been derived under additional assumptions, see~\citet{Van2008high} and \citet[Chapter 12.4]{Van2016estimation}.

\subsection{Graphical Models}
Our third example is graphical modeling. We consider the exponential trace framework~\citep{Lederer2016graphical}, which encompasses standard types of graphical models, such as Gaussian graphical models~\citep{Yuan2007model,Friedman2008sparse}, non-paranormal graphical models~\citep{Gu2015local}, and Ising models~\citep{Lenz20,Brush67}; we refer to \citet[Section 2]{Lederer2016graphical} for details.

Exponential trace models are  based on densities of the form 
\begin{equation*}
f_\matrix(\datav)=\exp\big(-\inprod{\matrix}{T(\datav)}-\normtotal(\matrix)\big)\equationspace\big(\matrix\in\Matrix\big)
\end{equation*}
with respect to some $\sigma$-finite measure $\measure$ on $\R^p$. Here, the matrix-valued parameter~$\matrix$ encodes the dependence structure of the random vector $X \in\mathcal X\subset \R^p$, the matrix-valued function $T(\cdot)$ on $\mathcal X$ determines  how the data enters the model, $\normtotal(\matrix)$ is the normalization, and  $\Matrix$ is a convex set of matrices~\matrix\ with  finite normalization. Given independent observations $\dataoneton$ of $\data$, the MRLEs in~\eqref{eqn:MRLE} are of the form
\begin{equation*}
	\matrixest \in \argmin_{ \matrix \in \Matrix}\big\{\si \inprod{\matrix}{T(\data^i)}+n\hspace{0.1mm}\normtotal(\matrix) +\tuningparameter\mypenalty(\matrix) \big\}.
\end{equation*}
Because the function $\mycondition$ is linear in $\matrix$, we can then apply Theorem~\ref{MAINTHEORY} to derive the following bound.
\begin{lemma}[graphical models]\label{cor: specific slow rate bound}
	For all $\tuningparameter  \geq   \dualpenalty\big(\si\big(\E_{\matrix^*}T(\data^i)-T(\data^i)\big)\big)$, it holds that
	\begin{equation*}
	\populationrisk(\widehat{\matrix}) \leq \tuningparameter \mypenalty (\matrix^*)+\tuningparameter \mypenalty (-\matrix^*) ,
	\end{equation*}
	with Kullback-Leibler loss
	\begin{equation*}
	\populationrisk(\matrixest)  =  \inprod{\sumn\E_{\matrixtrue}T(\datasi)}{\matrixest-\matrixtrue}-n\normtotal(\matrixtrue) +n\normtotal(\matrixest). 
	\end{equation*}
\end{lemma}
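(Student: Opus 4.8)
The plan is to recognize the exponential trace model as a special case of the framework of Section~\ref{sec: general theory} and to read the conclusion off Theorem~\ref{MAINTHEORY}. First I would set up the applicability. Treating the whole sample as the ``data'' in the sense of~\eqref{eqn:MRLE}, independence of $\dataoneton$ gives the joint density $\prod_{i=1}^{n}\density{\matrix}(\datasi)=\exp\big(-\si\inprod{\matrix}{T(\datasi)}-n\normtotal(\matrix)\big)$, whose logarithm is affine in $\matrix$; in particular $\mycondition$ is linear --- hence convex --- in $\matrix$ both at the level of a single observation and, by independence, for this product density, so the only structural requirement of the general framework is fulfilled. Since in addition $\Matrix$ is convex and $\mypenalty$ satisfies \eqref{assume:only0} and \eqref{assume: positive homogeneous} by assumption, Theorem~\ref{MAINTHEORY} applies, and it remains to make its two model-dependent ingredients explicit.

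For the Kullback-Leibler loss, I would substitute the joint log-density into the definition of $\populationrisk$ and pull the expectation through the affine integrand, obtaining $\populationrisk(\matrix)=\inprod{\sumn\E_{\matrixtrue}T(\datasi)}{\matrix-\matrixtrue}-n\normtotal(\matrixtrue)+n\normtotal(\matrix)$; evaluating at $\matrix=\matrixest$ gives the displayed formula. For the noise term, I would form the empirical counterpart from~\eqref{eq: empirical risk}, $\empiricalrisk(\matrix)=\inprod{\sumn T(\datasi)}{\matrix-\matrixtrue}-n\normtotal(\matrixtrue)+n\normtotal(\matrix)$: the normalization terms cancel in the difference, so that $(\populationrisk-\empiricalrisk)(\matrix)=\inprod{\gmgradient}{\matrix-\matrixtrue}$. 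This function is linear, hence differentiable, on the whole Hilbert space, its subdifferential at every point (in particular at $\matrixest$) being the singleton $\{\gmgradient\}$; thus $\gradient=\gmgradient$.

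Plugging $\gradient=\gmgradient$ into Theorem~\ref{MAINTHEORY} then gives exactly the assertion: for every $\tuningparameter\geq\dualpenalty\big(\gmgradient\big)$ one has $\populationrisk(\matrixest)\leq\tuningparameter\mypenalty(\matrixtrue)+\tuningparameter\mypenalty(-\matrixtrue)$, with $\populationrisk(\matrixest)$ equal to the Kullback-Leibler expression derived above. I do not expect a genuine obstacle --- the argument is essentially bookkeeping around Theorem~\ref{MAINTHEORY} --- but two points deserve care: the factors of $n$ must remain attached to $\normtotal$ when passing from a single observation to the $n$-fold product density, and one must verify that the $\normtotal(\matrix)$ terms cancel in $\populationrisk-\empiricalrisk$. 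This last cancellation is precisely what collapses the empirical process term into the clean linear functional $\inprod{\gmgradient}{\cdot}$, whose (sub)gradient is unambiguous; it is the reason the noise level in Theorem~\ref{MAINTHEORY} can be identified here without any restricted-eigenvalue-type condition.
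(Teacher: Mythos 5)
Your proposal is correct and follows essentially the same route as the paper: compute the empirical and population Kullback--Leibler losses from the exponential trace log-likelihood, observe that the normalization terms cancel so that $\populationrisk-\empiricalrisk$ is the linear functional $\matrix\mapsto\inprod{\gmgradient}{\matrix-\matrixtrue}$ with gradient $\gmgradient$, and plug this into Theorem~\ref{MAINTHEORY}. The additional checks you flag (the factor $n$ attached to $\normtotal$ and the cancellation of $\normtotal$ in the difference) are exactly the bookkeeping the paper carries out.
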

\noindent  The Kullback-Leibler loss is a standard predictive risk for graphical models~\citep{Yuan2007model,Shevlyakova2012KLD}  and has a geometric interpretation as the difference between $\normtotal(\matrixest)$ and the tangent approximation of $\normtotal(\matrixest)$ at $\matrixtrue$~\citep[Chapter 5.2.2]{Wainwright08}.

As a special case, Lemma~\ref{cor: specific slow rate bound} applies to the graphical lasso for multivariate Gaussian data. Recall that the graphical lasso~\citep{Yuan2007model, Friedman2008sparse} is formulated as 
\begin{equation*}
\matrixest \in\argmin_{ \matrix \in S_{++}^{p}}\big\{\nmean\tr\big(\data^i(\data^i)^\top\matrix\big)-\log\det \matrix +\tuningparameter'\normone{\mbox{vec}(\matrix)} \big\},
\end{equation*}
where $S_{++}^{p} $ is the set of positive definite $p\times p$ matrices and  $\data^1, \dots, \data^n$ are i.i.d. samples from a centered Gaussian distribution with unknown covariance matrix~$(\matrixtrue)^{-1}$. The Kullback-Leibler loss of $\matrixest$ reads
\begin{equation*}
\frac{1}{n}\populationrisk(\matrixest)=\frac{1}{2}\big( \inprod{\matrixest}{(\matrixtrue)^{-1}} - \log\determinant{\matrixest} + \log\determinant{\matrixtrue} -p\big),
\end{equation*}
which is equivalent (up to the factor 1/2) to Stein's loss of the centered multivariate Gaussian distribution with covariance matrix $(\matrixest)^{-1}$~\citep{James1961estimation}. Thus, if $\tuningparameter' \geq  \norm{ \mbox{vec}\big((\matrixtrue)^{-1} - \frac{1}{n}\sum_{i=1}^{n}\data^i(\data^i)^\top\big)}_\infty$,  Lemma~\ref{cor: specific slow rate bound} yields 
\begin{equation*}
\inprod{\matrixest}{(\matrixtrue)^{-1}} - \log\determinant{\matrixest} + \log\determinant{\matrixtrue} -p \leq 2\tuningparameter'\normone{\mbox{vec}(\matrixtrue)}  .
\end{equation*} 
Following Lemma~C3 in Appendix~C, the tuning parameter~$r'$ can be calibrated such that with probability at least $1-4\exp(-t^2),$ it holds that
\begin{equation*}
\frac{1}{n}\populationrisk(\matrixest) \leq 160 \max_{k\in\{1,\dots, p\}}\big((\matrixtrue)^{-1}\big)_{kk}\sqrt{\frac{1}{n}\big(t^2 + \log\big(p(p-1)\big)\big)}\,\sum_{i_1=1}^{p}\sum_{i_2=1}^{p}\abs{\matrix^*_{i_1i_2}}
\end{equation*}
for all $t$ such that $0 <t<\sqrt{n/4-\log\big(p(p-1)\big)}$. 
Our theory thus establishes the rate~$\sqrt{\log p/n}$ for the graphical lasso in the Kullback-Leibler loss. This prediction rate complements the known estimation rates  $\sqrt{\log p/n}$ in vectorized-matrix $\ell_\infty$-norm and $\sqrt{\min\{s+p, d^2\}\log p/n}$ in spectral norm~\citep{Ravikumar2011high}, where~$s$ is the number of non-zero elements in~$\matrixtrue$ and $d$ is the maximum node degree. However, those results additionally require the mutual incoherence condition. Our prediction rate also complements the known estimation rate $\sqrt{(s+p)\log p/n}$ in Frobenius norm and spectral norm~\citep{Rothman2008sparse}, which requires only mild assumptions on the population covariance matrix.

%Our theory thus establishes the rate~$\sqrt{\log p/n}$ for the graphical lasso in Kullback-Leibler loss. 
%This prediction rate complements the known estimation rates  $\sqrt{\log p/n}$ in vectorized-matrix $\ell_\infty$-norm~\citep[Corollary 1]{Ravikumar2011high}, 
% 	$\sqrt{(s+p)\log p/n}$ in Frobenius norm and spectral norm~\citep{Rothman2008sparse}, and $\sqrt{\min\{s+p, d^2\}\log p/n}$ in spectral norm~\citep[Corollary 3]{Ravikumar2011high}, where~$s$ is the number of non-zero elements in~$\matrixtrue$ and $d$ is the maximum node degree. Recall, however, that the estimation results involve conditions on the true covariance matrix (such as mutual incoherence conditions), while in strong contrast, our bounds in Kullback-Leibler loss apply generally.

%%% Local Variables: 
%%% mode: latex
%%% TeX-master: t
%%% End: 

\section{Discussion}\label{sec:discussion}
We have established assumptionless oracle inequalities for a general class of maximum regularized likelihood estimators. For regression, the inequalities match known lower bounds up to log-factors. We conjecture that the same is true more generally; in particular, we believe that general counter-examples to ``fast rates'' can be generated similarly as in the regression case.

\section*{Acknowledgment}
We thank  Mohamed Hebiri and Jon Wellner for the many inspiring discussions and insightful suggestions. We also thank Jacob Bien, Roy Han, Joseph Salmon, Noah Simon, and  Yizhe Zhu for valuable input.

\bibliographystyle{biometrika}

%\bibliography{References.bib}

\appendix

\appendixone
%%% Local Variables: 
%%% mode: latex
%%% TeX-master: t
%%% End: 

\section{Proof of Theorem~1}
\label{appendix: main}
Before proving Theorem~1, we first introduce a lemma about the regularizer. Throughout, we use the convention $0\cdot\infty:=\infty.$

\begin{lemma}[inner product inequality]\label{lemma: innerProduct} 
	Let $\matrix, \matrixgeneric \in  \spanMatrix$. 
	It holds that
	\begin{equation}\label{eq:innerProductInequal}
	\inprod{\matrix}{\matrixgeneric} \leq \dualpenalty (\matrix)\mypenalty (\matrixgeneric) .
	\end{equation}
\end{lemma}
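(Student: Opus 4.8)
The plan is to reduce the claimed inequality to the defining supremum of $\dualpenalty(\matrix)$ via a normalization argument, after first disposing of the two degenerate cases $\mypenalty(\matrixgeneric)=0$ and $\mypenalty(\matrixgeneric)=\infty$.

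First I would treat $\mypenalty(\matrixgeneric)=0$. By the definiteness property~\eqref{assume:only0} this forces $\matrixgeneric=0$, so the left-hand side of~\eqref{eq:innerProductInequal} is $\inprod{\matrix}{\matrixgeneric}=0$. Since the origin is always feasible in the supremum defining $\dualpenalty$ (as $\mypenalty(0)=0\leq 1$), we have $\dualpenalty(\matrix)\geq\inprod{\matrix}{0}=0$, so the right-hand side is nonnegative; using the convention $0\cdot\infty:=\infty$ covers the subcase $\dualpenalty(\matrix)=\infty$. Next, if $\mypenalty(\matrixgeneric)=\infty$, the right-hand side equals $\infty$ (invoking the same convention when $\dualpenalty(\matrix)=0$), whereas the left-hand side is a finite real number because $\matrix,\matrixgeneric\in\spanMatrix$; hence the inequality holds trivially.

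The substantive case is $0<\mypenalty(\matrixgeneric)<\infty$. Here $\matrixgeneric\neq 0$ by~\eqref{assume:only0}, so positive homogeneity~\eqref{assume: positive homogeneous} applies with $t=1/\mypenalty(\matrixgeneric)>0$ and gives $\mypenalty\big(\matrixgeneric/\mypenalty(\matrixgeneric)\big)=1$. Thus $\matrixgeneric/\mypenalty(\matrixgeneric)$ is feasible in the supremum defining $\dualpenalty(\matrix)$, so $\dualpenalty(\matrix)\geq\inprod{\matrix}{\matrixgeneric/\mypenalty(\matrixgeneric)}=\inprod{\matrix}{\matrixgeneric}/\mypenalty(\matrixgeneric)$ by linearity of the inner product. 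Multiplying through by $\mypenalty(\matrixgeneric)>0$ yields~\eqref{eq:innerProductInequal}.

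I do not anticipate a genuine obstacle: this is the standard generalized H\"older/duality argument, and the only care needed is the bookkeeping around the extended-real conventions (the cases $\mypenalty(\matrixgeneric)=0$ with $\dualpenalty(\matrix)=\infty$ and $\mypenalty(\matrixgeneric)=\infty$ with $\dualpenalty(\matrix)=0$), which the convention $0\cdot\infty:=\infty$ is designed to settle. It is worth noting that convexity of $\mypenalty$ is never used---only~\eqref{assume:only0} and~\eqref{assume: positive homogeneous}---which is precisely what keeps the subsequent theory applicable to the non-convex regularizers discussed in Section~\ref{sec: general theory}.
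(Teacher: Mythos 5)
Your proof is correct and follows essentially the same route as the paper's: dispose of the degenerate case $\mypenalty(\matrixgeneric)=0$ via definiteness, absorb $\mypenalty(\matrixgeneric)=\infty$ through the convention $0\cdot\infty:=\infty$, and in the main case normalize $\matrixgeneric/\mypenalty(\matrixgeneric)$ to a feasible point of the supremum defining $\dualpenalty(\matrix)$ using positive homogeneity. Your handling of the extended-real subcases is in fact slightly more explicit than the paper's, but the argument is the same.
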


\begin{proof}[of Lemma~\normalfont{A1}] The proof consists of two steps. First, we show that Inequality (\ref{eq:innerProductInequal}) holds in the case $\mypenalty(\matrixgeneric) = 0  $. Second, we show that Inequality (\ref{eq:innerProductInequal}) holds in the case $\mypenalty(\matrixgeneric)\neq 0 $. In view of the mentioned convention, we can assume that $\mypenalty(\matrixgeneric)<\infty.$\\
	
	{\it Case~1.} If $\mypenalty(\matrixgeneric) = 0$, we have $\matrixgeneric = 0 $ since $\mypenalty(\matrixgeneric)=0$ if and only if $\matrixgeneric=0$ by condition (2).
	Then, 
	\begin{equation*}
	\inprod{\matrix}{\matrixgeneric} =  \dualpenalty (\matrix)\mypenalty (\matrixgeneric) =  0 .
	\end{equation*}
	In particular, $\inprod{\matrix}{\matrixgeneric} \leq \dualpenalty (\matrix)\mypenalty (\matrixgeneric)$, as desired.\\
	
	{\it Case~2.} If $\mypenalty(\matrixgeneric) \neq 0$, we rewrite
	\begin{equation*}
		\inprod{\matrix}{\matrixgeneric} = \inprod{\matrix}{\matrixgeneric}\cdot \frac{\mypenalty(\matrixgeneric) }{\mypenalty(\matrixgeneric) }= \inprod{\matrix}{\frac{\matrixgeneric}{\mypenalty(\matrixgeneric)}}\cdot \mypenalty(\matrixgeneric).
	\end{equation*}
	Next we show that $\inprod{\matrix}{\frac{\matrixgeneric}{\mypenalty(\matrixgeneric)}}\leq \dualpenalty(\matrix)$ by two observations. The first observation is that since \spanMatrix\ is a real vector space,
	\begin{equation*}
	\frac{\matrixgeneric}{\mypenalty(\matrixgeneric)} \in \spanMatrix.
	\end{equation*}
	The second observation is that $\mypenalty(\matrixgeneric) \in (0, \infty)$ in Case 2, and therefore for regularizers that are positive homogeneous of degree one as specified in condition (3),
	\begin{equation*}
	\mypenalty\Big(\frac{\matrixgeneric}{\mypenalty(\matrixgeneric)}\Big) =\frac{\mypenalty(\matrixgeneric)}{\mypenalty(\matrixgeneric)}=1.
	\end{equation*}	
	Combining the two observations, we have
	\begin{equation*}
		\inprod{\matrix}{\frac{\matrixgeneric}{\mypenalty(\matrixgeneric)}} \leq \sup\{ \inprod{\matrix}{\Lambda_1} \mid \Lambda_1 \in \spanMatrix, \mypenalty(\Lambda_1) \leq 1 \} = \dualpenalty({\matrix}).
	\end{equation*}
	Since $\mypenalty(\matrixgeneric)\in(0, \infty)$, it follows that $\inprod{\matrix}{\matrixgeneric}\leq  \dualpenalty(\matrix)\mypenalty(\matrixgeneric)$.
\end{proof}

We now proceed to the proof of Theorem~1.
\begin{proof}[of Theorem~1] The proof consists of three steps. First, we link the objective function of MRLEs with  the regularized Kullback-Leibler loss. Second, we use the convexity of $\matrix\mapsto\mycondition$ to obtain an enhanced basic inequality. Third, we use the properties of $u$ and $\dualpenalty$ shown in Lemma~A1 to bound the empirical process and conclude the proof.\\
	
	{\it Step~1: (Regularized Kullback-Leibler Loss)} We first show that the MRLE $\matrixest$ defined in (1) satisfies
	\begin{equation*}
		\matrixest \in \argmin_{\matrix \in \Matrix}\big\{\empiricalrisk(\matrix) + \tuningparameter \mypenalty (\matrix) \big\}.
	\end{equation*}
	
	Recall that MRLEs are defined as
	\begin{equation*}
		\matrixest  \in  \argmin_{ \matrix \in \Matrix}\big\{ -\log f_\matrix(\data) + \tuningparameter \mypenalty (\matrix) \big\}.
	\end{equation*}
	Since adding constant terms does not alter the estimator, we rewrite the definition as
	\begin{equation*}
	\label{eqn: regularized MLE 2}
	\matrixest \in \argmin_{\matrix \in \Matrix}\big\{\log\density{\matrixtrue}(\data)-\log f_\matrix(\data) + \tuningparameter \mypenalty (\matrix) \big\}.
	\end{equation*}
	The term $\log\density{\matrixtrue}(\data)-\log f_\matrix(\data)$ is the empirical version of the Kullback-Leibler loss defined in Equation~(4). Hence we obtain an equivalent definition of MRLE in the form of
	\begin{equation*}
		\matrixest \in \argmin_{\matrix \in \Matrix}\big\{\empiricalrisk(\matrix) + \tuningparameter \mypenalty (\matrix) \big\}.
	\end{equation*}
	This concludes Step~1.\\
	
	{\it Step~2: (Enhanced Basic Inequality)} We use Step~1 and the convexity of $\matrix\mapsto\mycondition$ to derive the enhanced basic inequality
	\begin{equation*}
	\populationrisk(\matrixest)  \leq  \tuningparameter\mypenalty(\matrixtrue)-\tuningparameter\mypenalty(\matrixest) + \inprod{\gradient}{\matrixest}+\inprod{\gradient}{-\matrixtrue}.
	\end{equation*}
	
	The proof of this inequality has two ingredients. The first ingredient is that $\matrixest$ minimizes $\empiricalrisk(\matrix)+ 	\tuningparameter \mypenalty (\matrix)$, as derived in Step~1. Hence, in particular,
	\begin{equation*}
	\empiricalrisk(\matrixest)+ \tuningparameter \mypenalty (\matrixest) \leq \empiricalrisk(\matrixtrue)+ \tuningparameter \mypenalty (\matrixtrue).
	\end{equation*}
	Rearranging the inequality yields
	\begin{equation*}
	\empiricalrisk(\matrixest)  \leq  \empiricalrisk (\matrixtrue)+ \tuningparameter\mypenalty(\matrixtrue)-\tuningparameter\mypenalty(\matrixest) .
	\end{equation*}
	%for all $\widehat B \in \partial \mypenalty(\matrixest)$.
	
	The second ingredient is the convexity of $\mycondition$ in $\matrix$. Since $\populationrisk(\matrix)-\empiricalrisk(\matrix) =( \E_{\matrixtrue}\log\density{\matrixtrue}-\log\density{\matrixtrue})+(\mycondition)$, the convexity implies that the function $\populationrisk(\matrix)-\empiricalrisk(\matrix)$ is also convex in  $\matrix$. Hence, it holds that
	\begin{equation*}
	\populationrisk(\matrixtrue) -\empiricalrisk(\matrixtrue) \geq \populationrisk(\matrixest) -\empiricalrisk(\matrixest) + \inprod{\gradient}{\matrixtrue-\matrixest},
	\end{equation*}
	where $\gradient$ is any subgradient of $\populationrisk(\matrix)-\empiricalrisk(\matrix)$ at $\matrixest$.
	Rearranging the equality leads to 
	\begin{equation*}
		\populationrisk(\matrixest) -\empiricalrisk(\matrixest) \leq \populationrisk(\matrixtrue) -\empiricalrisk(\matrixtrue) + \inprod{\gradient}{\matrixest-\matrixtrue}.
	\end{equation*}
	
	Combining the two ingredients and doing some algebra yield 
	\begin{equation*}
		\populationrisk(\matrixest) \leq \populationrisk (\matrixtrue)+ \tuningparameter\mypenalty(\matrixtrue)-\tuningparameter\mypenalty(\matrixest) + \inprod{\gradient}{\matrixest-\matrixtrue}.
	\end{equation*}
	By the definition of $\populationrisk(\matrixtrue)$, we also find
	\begin{equation*}
	\populationrisk(\matrixtrue)  =  \E_{\matrixtrue}\text{log}\Big(\frac{f_{\matrixtrue}(x)}{f_{\matrixtrue}(x)}\Big)=0  .
	\end{equation*}
	We can thus remove $\populationrisk (\matrixtrue)$ from the inequality above and find	\begin{align*}
	\populationrisk(\matrixest)  \leq  \tuningparameter\mypenalty(\matrixtrue)-\tuningparameter\mypenalty(\matrixest) +\inprod{\gradient}{\matrixest-\matrixtrue}
 =  \tuningparameter\mypenalty(\matrixtrue)-\tuningparameter\mypenalty(\matrixest) +\inprod{\gradient}{\matrixest}+\inprod{\gradient}{-\matrixtrue}.
	\end{align*}
	This concludes Step~2.\\
	
	{\it Step~3: (Bound for the Empirical Process Term)} We show that on the event where $\tuningparameter  \geq  \dualpenalty(\gradient)  $,  it holds that
	\begin{equation*}
	\populationrisk(\matrixest)  \leq  \tuningparameter \mypenalty (\matrixtrue)+\tuningparameter \mypenalty (-\matrixtrue)  .
	\end{equation*}
	
	To this end, we first apply Lemma~A1 to the last two terms on right-hand side of the result in Step~2 and find
	\begin{equation*}
	\populationrisk(\matrixest) \leq  \tuningparameter\mypenalty(\matrixtrue)-\tuningparameter\mypenalty(\matrixest)+ \dualpenalty(\gradient)\mypenalty(\matrixest)+\dualpenalty(\gradient)\mypenalty(-\matrixtrue)  .
	\end{equation*}
	Rearranging the terms of the right-hand side yields
	\begin{equation*}
	\populationrisk(\matrixest)  \leq  \tuningparameter\mypenalty(\matrixtrue)+\dualpenalty(\gradient) \mypenalty(-\matrixtrue) + \big(-\tuningparameter +\dualpenalty(\gradient) \big)\mypenalty(\matrixest) .
	\end{equation*}
	Because $u(\cdot)$ is non-negative by definition, the inequality $\tuningparameter  \geq  \dualpenalty(\gradient)  $ implies
	\begin{equation*}
	\big(-\tuningparameter +\dualpenalty(\gradient) \big)\mypenalty(\matrixest)  \leq  0 ,
	\end{equation*}
	and 
	\begin{equation*}
	\dualpenalty(\gradient) \mypenalty(-\matrixtrue)  \leq \tuningparameter \mypenalty (-\matrixtrue). 
	\end{equation*}
	Combining the last three inequalities, we thus find on the event where $\tuningparameter  \geq  \dualpenalty(\gradient)  $ the inequality
	\begin{equation*}
		\populationrisk(\matrixest) \leq \tuningparameter \mypenalty (\matrixtrue)+\tuningparameter \mypenalty (-\matrixtrue)  .
	\end{equation*}
	This concludes the Step~3 and thus completes the proof of Theorem~1.
\end{proof}

\appendixtwo

%%% Local Variables: 
%%% mode: latex
%%% TeX-master: t
%%% End: 

\section{Proof of Example Results}
\label{appendix: other}
\begin{proof}[of Lemma~1] This lemma is a specification of Theorem~1 to tensor response regression with array normal noise. The proof consists of three steps. First, we obtain the explicit form of the empirical and population version of Kullback-Leibler loss, $\empiricalrisk(\matrix)$ and $\populationrisk(\matrix)$. Second, we derive the gradient of $\conditiona$ at $\matrixest$, denoted as $\gradient$. At last, we apply Theorem~1 with the derived explicit forms of $\populationrisk(\matrixest)$ and $\gradient$ to conclude the proof. \\
	
	{\it Step~1.} We first derive the explicit form of the empirical and population version of Kullback-Leibler loss. Plugging the array normal density into the empirical Kullback-Leibler divergence between conditional densities $\density{\matrix}$ and $\density{\matrixtrue}$ yields 
	\begin{align*}
	\empiricalrisk(\matrix) =&\, \frac{1}{2}\sumn \big( \norm{( \tensorerrorformula )\times \Sigma^{-1/2}}^2 - \norm{( \tensorerror )\times \Sigma^{-1/2}}^2\big) \\
	=&\, \frac{1}{2}\sumn \big( \norm{(\tensorerror+ \tensorpredictaformula)\times \Sigma^{-1/2}}^2 -\\
	&\norm{(\tensorerror) \times \Sigma^{-1/2}}^2 \big).
	\end{align*}
	Lemma~D1 shows that $(\tensorerror+ \tensorpredictaformula)\times \Sigma^{-1/2}= (\tensorerror) \times \Sigma^{-1/2}+( \tensorpredictaformula)\times \Sigma^{-1/2} $. We can thus reorganize the above equation as
	\begin{align*}
		\empiricalrisk(\matrix) =&\, \frac{1}{2}\sumn \big( \norm{(\tensorerror\big)\times \Sigma^{-1/2} + (\tensorpredictaformula)\times \Sigma^{-1/2}}^2 -\\
		&\norm{(\tensorerror) \times \Sigma^{-1/2}}^2 \big).	
	\end{align*}
	We expand the first array norm as shown in Lemma~D2 and get 
	\begin{align*}
		\empiricalrisk(\matrix) =&\,\frac{1}{2} \sumn \big( \norm{(\tensorerror)\times \Sigma^{-1/2}}^2 +\norm{(\tensorpredictaformula )\times \Sigma^{-1/2}}^2\\
		&+2\inprod{(\tensorerror)\times \Sigma^{-1/2}}{(\tensorpredictaformula)\times \Sigma^{-1/2}}-\\
		&  \norm{(\tensorerror)\times \Sigma^{-1/2}}^2 \big).		
	\end{align*}
	Canceling the first and last term of $\empiricalrisk(\matrix)$ yields
	\begin{align*}
	\empiricalrisk(\matrix) =&\,\frac{1}{2} \sumn \big( \norm{(\tensorpredictaformula )\times \Sigma^{-1/2}}^2\\
	&+2\inprod{(\tensorerror)\times \Sigma^{-1/2}}{(\tensorpredictaformula )\times \Sigma^{-1/2}} \big).		
\end{align*}
Taking the expectation of $\empiricalrisk(\matrix)$ with respect to $\regressionoutputi$ conditioning on $\regressioncovariatesi$ gives the conditional Kullback-Leibler divergence 
	\begin{align*}
	\populationrisk(\matrix) =&\,\frac{1}{2} \sumn\E_\matrixtrue \big( \norm{(\tensorpredictaformula )\times \Sigma^{-1/2}}^2\\
	&+2\inprod{(\tensorerror)\times \Sigma^{-1/2}}{(\tensorpredictaformula )\times \Sigma^{-1/2}} \big)\\
	=&\,\frac{1}{2} \sumn\norm{(\tensorpredictaformula )\times \Sigma^{-1/2}}^2\\
	&+\sumn\E_\matrixtrue\inprod{(\tensorerror)\times \Sigma^{-1/2}}{(\tensorpredictaformula )\times \Sigma^{-1/2}}, 	
	\end{align*}
	where $\E_{\matrixtrue}$ denotes the conditional expectation conditioning on $\regressioncovariatesi$.
	The expectation of the inner product term becomes
	\begin{align*}
		&\E_\matrixtrue \inprod{(\tensorerror)\times \Sigma^{-1/2}}{(\tensorpredictaformula )\times \Sigma^{-1/2}} \\
		=&\,\sums{i_2=1}{\dimension_2}{i_p=1}{\dimension_p}\E_\matrixtrue\big((\tensorerror)\times \Sigma^{-1/2}\big)\tensorindexgeneric{i_2}{i_\tensordimension}\cdot\big((\tensorpredictaformula)\times \Sigma^{-1/2}\big)\tensorindexgeneric{i_2}{i_\tensordimension}.
	\end{align*}
Further, since $\E_\matrixtrue\big( \regressionoutputi \big)=\matrixtrue\times_1\regressioncovariatesi$,
	\begin{align*}
		&\E_\matrixtrue\big((\tensorerror)\times \Sigma^{-1/2}\big)\tensorindexgeneric{i_2}{i_\tensordimension}\\
		=&\,\E_\matrixtrue\big( \sumsnew{j}{2}(\tensorerror)\tensorindexgeneric{j_2}{j_\tensordimension}\cdot\elementinverse{2}{i_2j_2}\dots\elementinverse{\tensordimension}{i_\tensordimension j_\tensordimension}\big)\\
		=&\, \sumsnew{j}{2}\E_\matrixtrue(\tensorerror)\tensorindexgeneric{j_2}{j_\tensordimension}\cdot\elementinverse{2}{i_2j_2}\dots\elementinverse{\tensordimension}{i_\tensordimension j_\tensordimension}\\
		=&\, 0,
	\end{align*}
where $\elementinverse{k}{ij}$ denotes the $(i,j)$th element of $A_k^{-1}$.
	Thus, $\E_\matrixtrue \inprod{(\tensorerror)\times \Sigma^{-1/2}}{(\tensorpredictaformula )\times \Sigma^{-1/2}} =0$. We then find
	\begin{equation*}
		\populationrisk(\matrix) =\frac{1}{2} \sumn\norm{(\tensorpredictaformula)\times \Sigma^{-1/2}}^2.	
	\end{equation*}
	By the definition of the mode-$1$ product, we write $\tensorpredictaformula=\matrixdeltaformula \tensorproduct{1} \regressioncovariatesi$ and conclude that the conditional Kullback-Leibler divergence of tensor regression with array normal noise has the explicit form of
	\begin{equation*}
	\populationrisk(\matrix)= \frac{1}{2} \sumn \norm{\matrixdeltaformula \tensorproduct{1} \regressioncovariatesi \times \Sigma^{-1/2}}^2,
	\end{equation*}
	which is the prediction error.\\ 
	
	{\it Step~2.} We derive the explicit form of $\gradient$. With $\populationrisk(\matrix)$ and $\empiricalrisk(\matrix)$ derived in Step~1, we obtain
	\begin{align*}
		\conditionaest =&\, \frac{1}{2} \sumn \norm{\matrixdelta \tensorproduct{1} \regressioncovariatesi \times \Sigma^{-1/2}}^2 - \frac{1}{2} \sumn \big( \norm{(\tensorpredicta )\times \Sigma^{-1/2}}^2\\
		&+2\inprod{(\tensorerror)\times \Sigma^{-1/2}}{(\tensorpredicta )\times \Sigma^{-1/2}} \big).
	\end{align*}
	Canceling the first and second term in the above equality yields
	\begin{equation*}
		\conditionaest = -\sumn\inprod{(\tensorerror)\times \Sigma^{-1/2}}{(\tensorpredicta)\times \Sigma^{-1/2}} .
	\end{equation*}
	Let $\Wi:=\inprod{(\tensorerror)\times \Sigma^{-1/2}}{(\tensorpredicta )\times \Sigma^{-1/2}} $. We write $\conditionaest = -\sumn\Wi$ and find 
	\begin{equation*}
	\gradient = -\sumn\Wigradient,
	\end{equation*}
	where $\Wigradient$ denotes the gradient of $\mathcal{W}^i$ at $\matrixest$. 
	
	Next we derive $\Wigradient$. Expanding $\Wi$ by the definition of tensor inner product, we get
	\begin{equation*}
		\Wi =\sums{i_2=1}{\dimension_2}{i_\tensordimension=1}{\dimension_\tensordimension}\big( (\tensorerror)\times \Sigma^{-1/2} \big)\tensorindexgeneric{i_2}{i_\tensordimension}\big( \matrixdelta \tensorproduct{1}\regressioncovariatesi \times \Sigmainversequareroot  \big)\tensorindexgeneric{i_2}{i_\tensordimension}.
	\end{equation*}
	Expanding the tensor operations in the second term yields
	\begin{equation*}
	\Wi =\sumsnew{i}{2}\big( (\tensorerror)\times \Sigma^{-1/2} \big)\tensorindexgeneric{i_2}{i_\tensordimension}\cdot\big( \sumsnew{j}{1}\matrixdelta_{j_1, \dots, j_p} z^i_{1 j_1}\elementinverse{2}{i_2j_2}\dots\elementinverse{p}{i_pj_p} \big).
	\end{equation*}
	The partial derivative of $\mathcal{W}^i$ with regarding to $\matrixest_{m_1, \dots, m_p}$ is
	\begin{align*}
	\frac{\partial \mathcal{W}^i}{\partial \matrixest_{m_1, \dots, m_p}} =&\, \sumsnew{i}{2}\big( (\tensorerror)\times \Sigma^{-1/2} \big)\tensorindexgeneric{i_2}{i_\tensordimension}( -z^i_{1 m_1}\elementinverse{2}{i_2m_2}\dots\elementinverse{p}{i_pm_p}).
	\end{align*}
	Here $z^i_{1 m_1}$ is the $(m_1,1)$th entry of $\regressioncovariatesit$, $\elementinverse{k}{i_km_k}$ is the $(m_k, i_k)$th entry of $(A_k^{-1})^\top$,   $k\in\{2, \dots, p\}$. Let $( \Sigma^{-1/2})^\top := \{( A_2^{-1})^\top, \dots, ( A_p^{-1})^\top\}$, we obtain
	\begin{equation*}
	\frac{\partial \mathcal{W}^i}{\partial \matrixest_{m_1, \dots, m_p}} =-\big( (\tensorerror) \times \Sigmainversequareroot  \times (\Sigmainversequareroot)^\top\tensorproduct{1} \regressioncovariatesit \big)_{m_1, \dots, m_p}.
	\end{equation*}
	Hence, $\Wigradient$ has the form
	\begin{equation*}
		\Wigradient= -(\tensorerror) \times \Sigmainversequareroot \times (\Sigmainversequareroot)^\top\tensorproduct{1} \regressioncovariatesit .
	\end{equation*}
	Plugging the explicit form of $\Wigradient$ into the equation of $\gradient$ yields
	\begin{equation*}
		\gradient = \sumn(\tensorerror) \times \Sigmainversequareroot \times (\Sigmainversequareroot)^\top \tensorproduct{1} \regressioncovariatesit.
	\end{equation*}
Under the assumed tensor regression model, $\tensorerror= \regressionnoisei$, so that
\begin{equation*}
\gradient= \sumn \big( \regressionnoisei \times \Sigmainversequareroot  \times (\Sigmainversequareroot)^\top\tensorproduct{1} \regressioncovariatesit\big).
\end{equation*}
Expanding the Tucker product as a sequence of mode products gives
\begin{equation*}
\gradient= \sumn \big( \regressionnoisei \tensorproduct{1} A_2^{-1} \tensorproduct{2} \dots \tensorproduct{p-1} A_{p}^{-1}  \tensorproduct{1} (A_2^{-1})^\top \tensorproduct{2} \dots \tensorproduct{p-1} (A_{p}^{-1})^\top\tensorproduct{1} \regressioncovariatesit\big).
\end{equation*}
The properties of the tensor mode product include $(\mathcal{T}\times_i W)\times_j V = (\mathcal{T}\times_j V)\times_i W =\mathcal{T}\times_i W\times_j V$ for $i\neq j$ and $(\mathcal{T}\times_i W)\times_i V = \mathcal{T}\times_i (VW)$~\citep{De2000multilinear}. Reorganizing the above equation yields
\begin{equation*}
\gradient= \sumn \big( \regressionnoisei \tensorproduct{1} \big((A_2^{-1})^\top A_2^{-1}\big) \tensorproduct{2} \dots \tensorproduct{p-1} \big((A_p^{-1})^\top A_p^{-1}\big) \tensorproduct{1} \regressioncovariatesit\big).
\end{equation*}
Since $(A_k^{-1})^TA_k^{-1} = ( A_k A_k^T )^{-1} = \Sigma_k^{-1}$, 
\begin{equation*}
\gradient= \sumn \big( \regressionnoisei \times \Sigma^{-1} \tensorproduct{1} \regressioncovariatesit \big).
\end{equation*}
	
	{\it Step~3.} We apply Theorem~1 with the explicit form of $\populationrisk(\matrixest)$ and $\gradient$, and conclude that for all $\tuningparameter \geq \dualpenalty\big(\sumn \big( \regressionnoisei  \times \Sigma^{-1}\tensorproduct{1} \regressioncovariatesit \big)\big)$, it holds that
	\begin{equation*}
	\frac{1}{2}  \sumn\norm{\matrixdelta \tensorproduct{1} \regressioncovariatesi \times \Sigma^{-1/2}}^2 \leq \tuningparameter\mypenalty(\matrixtrue)+ \tuningparameter\mypenalty(-\matrixtrue).
	\end{equation*}
	This completes the proof.
\end{proof}

\begin{proof}[of Lemma~2]
This lemma is a specification of Theorem~1 to generalized linear tensor regression. The proof consists of three steps. First, we obtain the explicit form of the empirical and population version of the Kullback-Leibler loss, $\empiricalrisk(\matrix)$ and $\populationrisk(\matrix)$. Second, we derive the gradient of $\conditiona$ at $\matrixest$, denoted as $\gradient$. At last, we apply Theorem~1 with the derived explicit forms of $\populationrisk(\matrixest)$ and $\gradient$ to conclude the proof. \\

{\it Step~1.} We first derive the explicit form of the empirical and population version of the Kullback-Leibler loss. Plugging the conditional density of $\lmoutputi\mid\lmcovariatesi$ into the empirical Kullback-Leibler divergence yields 
\begin{equation*}
	\empiricalrisk(\matrix)= \sumn\big( \frac{\lmoutputi\inprod{\matrixtrue}{\lmcovariatesi}-b(\inprod{\matrixtrue}{\lmcovariatesi})}{\alpha} +c(\lmoutputi, \alpha)- \frac{\lmoutputi\inprod{\matrix}{\lmcovariatesi}-b(\inprod{\matrix}{\lmcovariatesi})}{\alpha} - c(\lmoutputi, \alpha)\big).
\end{equation*}
Canceling the term $c(\lmoutputi, \alpha)-c(\lmoutputi, \alpha)$ and reorganizing the terms on the right-hand side yields
\begin{equation*}
	\empiricalrisk(\matrix)= \frac{1}{\alpha}\sumn\big( \lmoutputi\inprod{\matrixtrue-\matrix}{\lmcovariatesi} - b(\inprod{\matrixtrue}{\lmcovariatesi}) + b(\inprod{\matrix}{\lmcovariatesi}) \big).
\end{equation*}
Taking the expectation of $\empiricalrisk(\matrix)$ with respect to $\lmoutputi$ conditioning on $\regressioncovariatesi$, we obtain the conditional Kullback-Leibler divergence in the form of
\begin{equation*}
	\populationrisk(\matrix) = \frac{1}{\alpha}\sumn\E_{\matrixtrue}\big( \lmoutputi\inprod{\matrixtrue-\matrix}{\lmcovariatesi} - b(\inprod{\matrixtrue}{\lmcovariatesi}) + b(\inprod{\matrix}{\lmcovariatesi}) \big),	
\end{equation*}
where $\E_{\matrixtrue}$ denotes the conditional expectation conditioning on $\regressioncovariatesi$. As $\E_{\matrixtrue}(\lmoutputi) = g^{-1}( \inprod{\matrixtrue}{\lmcovariatesi})$, we find that the conditional Kullback-Leibler divergence of the generalized linear tensor regression has the explicit form of
\begin{equation*}
\populationrisk(\matrix)= \frac{1}{\alpha}\sumn\big(g^{-1}( \inprod{\matrixtrue}{\lmcovariatesi})\cdot\inprod{\matrixtrue-\matrix}{\lmcovariatesi} - b(\inprod{\matrixtrue}{\lmcovariatesi}) + b(\inprod{\matrix}{\lmcovariatesi}) \big).
\end{equation*}

{\it Step~2.} We derive the explicit form of $\gradient$. With $\populationrisk(\matrix)$ and $\empiricalrisk(\matrix)$ derived in Step~1, we obtain
\begin{align*}
	\conditionaest =&\, \frac{1}{\alpha}\sumn\big(g^{-1}( \inprod{\matrixtrue}{\lmcovariatesi})\cdot\inprod{\matrixtrue-\matrixest}{\lmcovariatesi} - b(\inprod{\matrixtrue}{\lmcovariatesi}) + b(\inprod{\matrixest}{\lmcovariatesi}) -\\
	& \lmoutputi\inprod{\matrixtrue-\matrixest}{\lmcovariatesi} + b(\inprod{\matrixtrue}{\lmcovariatesi}) - b(\inprod{\matrixest}{\lmcovariatesi}) \big).
\end{align*}
Canceling the terms involving $b(\cdot)$ in the above equality yields
\begin{align*}
\conditionaest =&\, \frac{1}{\alpha}\sumn\big(g^{-1}( \inprod{\matrixtrue}{\lmcovariatesi})\cdot\inprod{\matrixtrue-\matrixest}{\lmcovariatesi} -\lmoutputi\inprod{\matrixtrue-\matrixest}{\lmcovariatesi} \big)\\
=&\, \frac{1}{\alpha}\sumn \big(g^{-1}(\inprod{\matrixtrue}{\lmcovariatesi})-\lmoutputi\big)\inprod{\matrixtrue-\matrixest}{\lmcovariatesi}. 
\end{align*}
Expanding the tensor inner product by definition, we find that the gradient $\gradient$ has the explicit form
\begin{align*}
	\gradient =&\, \frac{1}{\alpha}\sumn\big(g^{-1}(\inprod{\matrixtrue}{\lmcovariatesi})-\lmoutputi\big)(-\lmcovariatesi)\\
	=&\, \frac{1}{\alpha}\sumn\big(\lmoutputi-g^{-1}(\inprod{\matrixtrue}{\lmcovariatesi})\big)\lmcovariatesi. 
\end{align*}

{\it Step~3.} We apply Theorem~1 with the explicit form of $\populationrisk(\matrixest)$ and $\gradient$ and conclude that for all $\tuningparameter \geq \dualpenalty\big( \frac{1}{\alpha}\sumn\big(\lmoutputi-g^{-1}(\inprod{\matrixtrue}{\lmcovariatesi})\big)\lmcovariatesi\big)$, it holds that
\begin{equation*}
\populationrisk(\matrixest) \leq \tuningparameter\mypenalty(\matrixtrue)+ \tuningparameter\mypenalty(-\matrixtrue)
\end{equation*}
with Kullback-Leibler loss $\populationrisk(\matrixest)= \frac{1}{\alpha}\sumn\big(g^{-1}( \inprod{\matrixtrue}{\lmcovariatesi})\cdot\inprod{\matrixtrue-\matrixest}{\lmcovariatesi} - b(\inprod{\matrixtrue}{\lmcovariatesi}) + b(\inprod{\matrixest}{\lmcovariatesi}) \big)$. To convey the idea that the choice of the regularization parameter is in the form of noise, we write the lower bound of the regularization parameter as $\dualpenalty\big( \frac{1}{\alpha}\sumn\big(\lmoutputi-\E_{\matrixtrue}(\lmoutputi)\big)\lmcovariatesi \big)$.
\end{proof}

\begin{proof}[of Lemma~3]
	The proof consists of three steps. First, we obtain the explicit form of $\empiricalrisk(\matrix)$ and $\populationrisk(\matrix)$ in exponential trace models. Second, we derive the gradient of $\conditiona$ at $\matrixest$. At last, we apply Theorem~1 with the derived explicit forms. \\
	
	{\it Step~1.} In the exponential trace model, it holds that
	\begin{equation*}
	\sumn\text{log}\big(f_{\matrix}(\datasi)\big) = -\sumn\inprod{\matrix}{T(\datasi)}-n\normtotal(\matrix) .
	\end{equation*}
	The definition of inner product implies that $\inprod{\matrix}{T(\datasi)}= \inprod{T(\datasi)}{\matrix}$. Therefore, we write
	\begin{equation*}
	\sumn\text{log}\big(f_{\matrix}(\datasi)\big) = -\sumn\inprod{T(\datasi)}{\matrix}-n\normtotal(\matrix) .
	\end{equation*}
	Plugging the log-likelihood into the empirical Kullback-Leibler loss yields
	\begin{equation*}
	\empiricalrisk(\matrix) = -\sumn\inprod{T(\datasi)}{\matrixtrue}-n\normtotal(\matrixtrue) + \sumn\inprod{T(\datasi)}{\matrix}+n\normtotal(\matrix) .
	\end{equation*}
	Since inner product $\inprod{\cdot }{\cdot}$ is linear, we write
	\begin{equation*}
	\empiricalrisk(\matrix) = \inprod{\sumn T(\datasi)}{\matrix-\matrixtrue}-n\normtotal(\matrixtrue) +n\normtotal(\matrix) .
	\end{equation*}
	The population loss $\populationrisk(\matrix)$ is the expectation of the empirical loss $\empiricalrisk(\matrix)$. We thus obtain
	\begin{equation*}
	\populationrisk(\matrix)  = \E_{\matrixtrue}\big( \inprod{\sumn T(\datasi)}{\matrix-\matrixtrue}-n\normtotal(\matrixtrue) +n\normtotal(\matrix) \big).
	\end{equation*}
	Again, $\inprod{\cdot }{\cdot}$ is linear, and we find
	\begin{equation*}
	\populationrisk(\matrix)  =  \inprod{\sumn\E_{\matrixtrue}T(\datasi)}{\matrix-\matrixtrue}-n\normtotal(\matrixtrue) +n\normtotal(\matrix). 
	\end{equation*}
	
	{\it Step~2.} We derive the explicit form of $\gradient$ in the exponential trace model. With the forms of $\populationrisk(\matrix)$ and $\empiricalrisk(\matrix)$ derived in Step~1, we obtain
	\begin{equation*}
		\conditionaest=\big( \inprod{\sumn\E_{\matrixtrue}T(\datasi)}{\matrixest-\matrixtrue}-n\normtotal(\matrixtrue) +n\normtotal(\matrixest)\big)- \big(\inprod{\sumn T(\datasi)}{\matrixest-\matrixtrue}-n\normtotal(\matrixtrue) +n\normtotal(\matrixest) \big).
	\end{equation*}
	Applying the linearity of inner product and canceling $-n\normtotal(\matrixtrue) +n\normtotal(\matrixest)+n\normtotal(\matrixtrue) -n\normtotal(\matrixest)$, we obtain 
	\begin{equation*}
		\conditiona = \inprod{\gmgradient}{\matrixest-\matrixtrue}. 
	\end{equation*}
	The definition of inner product implies that 
	\begin{equation*}
		\gradient = \gmgradient.
	\end{equation*}
	
	{\it Step~3.}
	Plugging the explicit form of $\gradient$ into the lower bound for the regularization parameter in Theorem~1, we conclude for all $\tuningparameter  \geq  \dualpenalty\big(\gmgradient\big) $ the inequality
	\begin{equation*}
		\populationrisk(\matrixest) \leq \tuningparameter \mypenalty (\matrixtrue)+\tuningparameter \mypenalty (-\matrixtrue)  
	\end{equation*}
	with Kullback-Leibler loss $	\populationrisk(\matrixest)  =  \inprod{\sumn\E_{\matrixtrue}T(\datasi)}{\matrixest-\matrixtrue}-n\normtotal(\matrixtrue) +n\normtotal(\matrixest)$.
\end{proof}

\appendixthree

%%% Local Variables:
%%% mode: latex
%%% TeX-master: "EP"
%%% End:

\newcommand{\yp}[1]{\textcolor{red}{\bf #1}}
\section{Empirical Process Terms}\label{appendix:EP}
\begin{lemma}[control of the empirical term for tensor response regression]\label{EP1}
  Let  $z_j^i$ be the $j$th entry of row vector $\regressioncovariatesi$. Suppose $\sum_{i=1}^{n}(z_j^i)^2/n=1$, and $(\Sigma_k^{-1})_{i_ki_k} = h_k^2$, $h_k>0$, for $k \in \{2, \dots, p\}$, $i_k \in \{1, \dots, b_k\}$. Consider the case with zero-mean array normal noise in~Section~3.1 and $\mypenalty(\matrix) := \sums{i_1=1}{b_1}{i_p=1}{b_p}\abs{\matrix_{\tensorindexgeneric{i_1}{i_p}}}$. For all $t>0$ and $\tuningparameter_0 = \big(\prod_{k=2}^{p}h_k\big)\sqrt{2n\big(t^2+\log(\prod_{j=1}^{p}b_j)\big)}$, it holds that
  \begin{equation*}
  P\big(\dualpenalty\big(\sum_{i=1}^{n}\big( \regressionnoisei \times \Sigma^{-1}\tensorproduct{1} \regressioncovariatesit  \big)\big)\leq \tuningparameter_0\big) \geq 1-2\exp(-t^2).
  \end{equation*}
\end{lemma}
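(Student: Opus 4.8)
The plan is to reduce $\dualpenalty$ to an entrywise $\ell_\infty$-norm, recognise each coordinate of the argument as a centred Gaussian with an explicitly computable variance, and then finish with a Gaussian tail bound and a union bound over coordinates.

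First I would record that, since $\mypenalty$ is here the entrywise $\ell_1$-norm on $\Hilbertspace=\R^{b_1\times\dots\times b_p}$ equipped with the entrywise inner product, the definition of $\dualpenalty$ together with the classical $\ell_1$--$\ell_\infty$ duality shows that $\dualpenalty$ is the entrywise $\ell_\infty$-norm. Thus, writing $G:=\sumn\big(\regressionnoisei\times\Sigma^{-1}\tensorproduct{1}\regressioncovariatesit\big)$ with $\Sigma^{-1}=\{\Sigma_2^{-1},\dots,\Sigma_p^{-1}\}$, we have $\dualpenalty(G)=\max_{m_1,\dots,m_p}\abs{G_{m_1,\dots,m_p}}$, and expanding the mode products entrywise (as in the proof of Lemma~1) gives $G_{m_1,\dots,m_p}=\sumn z_{m_1}^i\big(\regressionnoisei\times\Sigma^{-1}\big)_{m_2,\dots,m_p}$.

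Second I would identify the law of a fixed coordinate $G_{m_1,\dots,m_p}$, working conditionally on the covariates so that the $\regressionnoisei$ are independent and array normal. From the density in Section~3.1 one reads off $\mathrm{vec}(\regressionnoisei)\sim\mathcal N(0,\Sigma_p\otimes\dots\otimes\Sigma_2)$; applying the mode maps $\Sigma_2^{-1},\dots,\Sigma_p^{-1}$ and using the multiplicativity of the Kronecker product yields $\mathrm{vec}(\regressionnoisei\times\Sigma^{-1})\sim\mathcal N(0,\Sigma_p^{-1}\otimes\dots\otimes\Sigma_2^{-1})$, so the coordinate $\big(\regressionnoisei\times\Sigma^{-1}\big)_{m_2,\dots,m_p}$ has variance $\prod_{k=2}^{p}(\Sigma_k^{-1})_{m_km_k}=\prod_{k=2}^{p}h_k^2$ by the diagonal hypothesis. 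Since the $\regressionnoisei$ are independent across $i$ and $\sumn(z_{m_1}^i)^2=n$, the variable $G_{m_1,\dots,m_p}$ is centred Gaussian with variance $n\prod_{k=2}^{p}h_k^2$, and this variance does not depend on $(m_1,\dots,m_p)$.

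Finally I would apply the Gaussian tail bound $P\big(\abs{\mathcal N(0,\sigma^2)}\ge a\big)\le 2\exp(-a^2/(2\sigma^2))$ with $\sigma^2=n\prod_{k=2}^{p}h_k^2$ and $a=\tuningparameter_0=\big(\prod_{k=2}^{p}h_k\big)\sqrt{2n\big(t^2+\log(\prod_{j=1}^{p}b_j)\big)}$. Then $a^2/(2\sigma^2)=t^2+\log(\prod_{j=1}^{p}b_j)$, hence $P\big(\abs{G_{m_1,\dots,m_p}}\ge\tuningparameter_0\big)\le 2\exp(-t^2)/\prod_{j=1}^{p}b_j$ for each of the $\prod_{j=1}^{p}b_j$ coordinates, and a union bound over coordinates gives $P\big(\dualpenalty(G)\ge\tuningparameter_0\big)\le 2\exp(-t^2)$; passing to the complement yields the stated inequality. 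The only real technical point is the variance computation in the second step: one must carry the Tucker/Kronecker index bookkeeping through carefully so that the separable covariance of the array normal law, the diagonal assumption on the $\Sigma_k^{-1}$, and the normalisation of the covariates combine to exactly $n\prod_{k=2}^{p}h_k^2$; everything else is routine Gaussian concentration.
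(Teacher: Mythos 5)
Your proposal is correct and follows essentially the same route as the paper's proof: identify $\dualpenalty$ with the entrywise $\ell_\infty$-norm, show each coordinate of $\sum_{i=1}^{n}\big(\regressionnoisei\times\Sigma^{-1}\tensorproduct{1}\regressioncovariatesit\big)$ is centred Gaussian with variance $n\prod_{k=2}^{p}h_k^2$, and finish with a Gaussian tail bound plus a union bound over the $\prod_{j=1}^{p}b_j$ entries. The only (cosmetic) difference is that you obtain the variance via the Kronecker-covariance characterization $\mathrm{vec}(\regressionnoisei)\sim\mathcal N(0,\Sigma_p\otimes\dots\otimes\Sigma_2)$, whereas the paper uses the representation $\regressionnoisei=N^i\times A$ and collapses $\regressionnoisei\times\Sigma^{-1/2}=N^i$; both yield the same per-coordinate variance.
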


\begin{proof}[of Lemma~\normalfont{C1}]
The proof consists of two steps. First, we plug in $\dualpenalty(\cdot)$ and re-express the target probability in terms of random variables $\big(\sum_{i=1}^{n}\big( \regressionnoisei \times \Sigma^{-1}\tensorproduct{1} \regressioncovariatesit  \big)_{\tensorindexgeneric{i_1}{i_\tensordimension}}$. Second, we apply the Chernoff inequality to bound the tail probabilities. \\

{\it Step~1.} For $\matrix \in \R^{\dimension_1\times \dots \times \dimension_\tensordimension}$, the dual of the regularizer  is of the form
  	\begin{equation*}
  	\dualpenalty(\matrix) := \sup\big\{  \inprod{\matrix}{\matrix'}  \mid \matrix' \in \R^{\dimension_1\times \dots \times \dimension_\tensordimension}, \mypenalty(\matrix')  \leq  1 \big\} = \max_{i_1, \dots, i_\tensordimension}\abs{\matrix_{\tensorindexgeneric{i_1}{i_\tensordimension}}}. 
  	\end{equation*}
Therefore,
  	\begin{align*}
  	&P\big(\dualpenalty\big(\sum_{i=1}^{n}\big( \regressionnoisei \times \Sigma^{-1}\tensorproduct{1} \regressioncovariatesit  \big)\big)\leq \tuningparameter_0\big)  \\
  	&= P\big(\max_{i_1, \dots, i_\tensordimension}\abs{\big(\sum_{i=1}^{n}\big( \regressionnoisei \times \Sigma^{-1}\tensorproduct{1} \regressioncovariatesit  \big)_{\tensorindexgeneric{i_1}{i_\tensordimension}}}\leq \tuningparameter_0\big)\\
  	&= 1- P\big(\max_{i_1, \dots, i_\tensordimension}\abs{\big(\sum_{i=1}^{n}\big( \regressionnoisei \times \Sigma^{-1}\tensorproduct{1} \regressioncovariatesit  \big)_{\tensorindexgeneric{i_1}{i_\tensordimension}}}> \tuningparameter_0\big)\\
  	&\geq 1-\sums{i_1=1}{b_1}{i_p=1}{b_p}P\big(\abs{\big(\sum_{i=1}^{n}\big( \regressionnoisei \times \Sigma^{-1}\tensorproduct{1} \regressioncovariatesit  \big)_{\tensorindexgeneric{i_1}{i_\tensordimension}}}> \tuningparameter_0\big).
  	\end{align*}

{\it Step~2.} We now work on the distribution of $\big(\sum_{i=1}^{n}\big( \regressionnoisei \times \Sigma^{-1}\tensorproduct{1} \regressioncovariatesit  \big)_{\tensorindexgeneric{i_1}{i_\tensordimension}}$ to bound its tail probability. We first observe that
  	\begin{align*}
  	\big(\sum_{i=1}^{n}\big( \regressionnoisei \times \Sigma^{-1}\tensorproduct{1} \regressioncovariatesit  \big)_{\tensorindexgeneric{i_1}{i_\tensordimension}}
  	&= \big(\sum_{i=1}^{n}\big( \regressionnoisei \times \Sigma^{-1/2}\times(\Sigmainversequareroot)^\top\tensorproduct{1} \regressioncovariatesit  \big)_{\tensorindexgeneric{i_1}{i_\tensordimension}}.
  	\end{align*}
  	The definition of the array normal distribution in~\cite{Hoff2011separable} shows that
  	\begin{equation*}
  	\regressionnoisei = N^i \times A,
  	\end{equation*}
  	where $N^i$  is an array of independent standard normal entries in $\R^{\dimension_2\times \dots \times \dimension_\tensordimension}$. Then,
  	\begin{equation*}
  		\regressionnoisei \times \Sigma^{-1/2} = (N^i \times A) \times \Sigma^{-1/2}.
  	\end{equation*}
  	Expanding the Tucker product in the above equation yields
  	\begin{equation*}
  			\regressionnoisei \times \Sigma^{-1/2}
  		= (N^i \tensorproduct{1}A_2\tensorproduct{2}\dots\tensorproduct{p-1}A_p)\tensorproduct{1} A_2^{-1}\tensorproduct{2} \cdots \tensorproduct{p-1}A_p^{-1}.
  	\end{equation*}
  	Further, we apply the properties of mode products on the right-hand side and find
  	\begin{equation*}
  	\regressionnoisei \times \Sigma^{-1/2}
  	= N^i \tensorproduct{1}(A_2^{-1}A_2)\tensorproduct{2}\dots\tensorproduct{p-1}(A_p^{-1}A_p)
  	 =N^i \tensorproduct{1}I^{(2)}\tensorproduct{2}\dots\tensorproduct{p-1}I^{(p)},
  	\end{equation*}
  	where $I^{(k)}$ is the identity matrix of dimension $b_k\times b_k$ for $k \in \{2, \dots, p\}$. Hence,
  	\begin{equation*}
  		\regressionnoisei \times \Sigma^{-1/2} = N^i.
  	\end{equation*}
  	The above equality can be shown by writing out the element-wise form of $\regressionnoisei \times \Sigma^{-1/2}$ as
  	\begin{align*}
  	(\regressionnoisei \times \Sigma^{-1/2})_{i_2, \dots, i_p} 
  	& = \big(N^i \tensorproduct{1}I^{(2)}\tensorproduct{2}\dots\tensorproduct{p-1}I^{(p)}\big)_{i_2, \dots, i_p}\\
  	&= \sums{j_2}{b_2}{j_p}{b_p}N^i_{j_2, \dots, j_p} I^{(2)}_{i_2j_2}\dots I^{(p)}_{i_pj_p}\\
  	&= N^i_{i_2, \dots, i_p}.
  	\end{align*}
  	Plugging the equality between $\regressionnoisei \times \Sigma^{-1/2}$ and $N^i$ into $\big(\sum_{i=1}^{n}\big( \regressionnoisei \times \Sigma^{-1/2}\times(\Sigmainversequareroot)^\top\tensorproduct{1} \regressioncovariatesit  \big)_{\tensorindexgeneric{i_1}{i_\tensordimension}}$ shows
  	\begin{align*}
  	\big(\sum_{i=1}^{n}\big( \regressionnoisei \times \Sigma^{-1}\tensorproduct{1} \regressioncovariatesit  \big)_{\tensorindexgeneric{i_1}{i_\tensordimension}}
  	& =  \big(\sum_{i=1}^{n}\big( N^i \tensorproduct{1}(A_2^{-1})^\top\tensorproduct{2}\dots\tensorproduct{p-1}(A_p^{-1})^\top \tensorproduct{1} \regressioncovariatesit\big)_{\tensorindexgeneric{i_1}{i_\tensordimension}}\\
  	&= \si\sum_{j_2=1}^{b_2}\cdots\sum_{j_p=1}^{b_p}N^i_{j_2,\dots, j_p} (A_2^{-1})^\top_{i_2j_2}\dots(A_p^{-1})^\top_{i_pj_p}z^i_{i_1}.
  	\end{align*}
  	Since $N^i$ is an array of independent standard normal entries, we find
  	\begin{equation*}
  	\big(\sum_{i=1}^{n}\big( \regressionnoisei \times \Sigma^{-1}\tensorproduct{1} \regressioncovariatesit  \big)_{\tensorindexgeneric{i_1}{i_\tensordimension}}\sim \mathcal{N}(0, \si\sum_{j_2=1}^{b_2}\dots\sum_{j_p=1}^{b_p} (z^i_{i_1}(A_2^{-1})^\top_{i_2j_2}\dots(A_p^{-1})^\top_{i_pj_p})^2),
  	\end{equation*}
  where the variance can be re-written as
  	\begin{equation*}
  	\big(\si (z^i_{i_1})^2\big)\big(\sum_{j_2=1}^{b_2}(A_2^{-1})^2_{j_2i_2}\big)\dots\big(\sum_{j_p=1}^{b_p}(A_p^{-1})^2_{j_pi_p}\big).
  	\end{equation*}
  	Observe that
  	$\sum_{j_k=1}^{b_k}(A_k^{-1})^2_{j_ki_k} = \big((A_k^\top)^{-1}(A_k)^{-1}\big)_{i_ki_k}= (\Sigma_k^{-1})_{i_ki_k} = h_k^2$, 
  	and $\sum_{i=1}^{n}(z_j^i)^2/n=1$, we show 
  	\begin{equation*}
  	\big(\sum_{i=1}^{n}\big( \regressionnoisei \times \Sigma^{-1}\tensorproduct{1} \regressioncovariatesit  \big)_{\tensorindexgeneric{i_1}{i_\tensordimension}}\sim \mathcal{N}(0, n\prod_{k=2}^{p}h_k^2).
  	\end{equation*}
  	We now apply the Chernoff bound for the normal distribution and control the stochastic term to find
  	\begin{align*}
  	& \sums{i_1=1}{b_1}{i_p=1}{b_p}P\big(\abs{\big(\sum_{i=1}^{n}\big( \regressionnoisei \times \Sigma^{-1}\tensorproduct{1} \regressioncovariatesit  \big)_{\tensorindexgeneric{i_1}{i_\tensordimension}}}> \tuningparameter_0\big)\\ 
  	&= 2\prod_{m=1}^{p}b_m\cdot P\big(\big(\sum_{i=1}^{n}\big( \regressionnoisei \times \Sigma^{-1}\tensorproduct{1} \regressioncovariatesit  \big)_{\tensorindexgeneric{i_1}{i_\tensordimension}}> \tuningparameter_0\big) \\
  	& \leq 2\prod_{m=1}^{p}b_m \cdot \exp(-\frac{\tuningparameter_0^2}{2n\prod_{k=2}^{p}h_k^2})).
  	\end{align*}  	
Plugging in $\tuningparameter_0$ yields
\begin{align*}
	&  \sums{i_1=1}{b_1}{i_p=1}{b_p}P\big(\abs{\big(\sum_{i=1}^{n}\big( \regressionnoisei \times \Sigma^{-1}\tensorproduct{1} \regressioncovariatesit  \big)_{\tensorindexgeneric{i_1}{i_\tensordimension}}}> \tuningparameter_0\big)\\ 
	& \leq 2\prod_{m=1}^{p}b_m \cdot \exp(-\frac{\prod_{k=2}^{p}h_k^2\cdot 2n (t^2+\log(\prod_{j=1}^{p}b_j)}{2n\prod_{k=2}^{p}h_k^2}) \\
	& = 2\prod_{m=1}^{p}b_m\cdot \exp(-t^2-\log(\prod_{j=1}^{p}b_j))\\
	& = 2\exp(-t^2).
\end{align*}  	
  	
We conclude that $P(\dualpenalty\big(\sum_{i=1}^{n}\big( \regressionnoisei \times \Sigma^{-1}\tensorproduct{1} \regressioncovariatesit  \big)\big)\leq \tuningparameter_0) \geq 1- 2\exp(-t^2)$ as desired.
\end{proof}

\begin{lemma}[control of the empirical term for logistic regression]\label{EP2}
	Let  $z_j^i$ be the $j$th entry of vector $\regressioncovariatesi$. Suppose $\sum_{i=1}^{n}(z_j^i)^2/n=1$ and $\mypenalty(\matrix) := \sum_{j=1}^{b_1}\abs{\matrix_j}$. For all $t>0$ and $\tuningparameter_0 = \sqrt{(1+2\max_i\{p^i(1-p^i)\})/3\cdot n(t^2+\log b_1)}$, where $p^i := \E_{\matrix^*}y^i = e^{\inprod{\matrixtrue}{\lmcovariatesi}}/\big(1+e^{\inprod{\matrixtrue}{\lmcovariatesi}}\big)$, it holds that
	\begin{equation*}
	P\big(\dualpenalty\big(\sum_{i=1}^{n}\big(\lmoutputi-p^i\big) \lmcovariatesi \big)\leq \tuningparameter_0\big) \geq 1-2\exp(-t^2).
	\end{equation*}
\end{lemma}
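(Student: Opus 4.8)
The plan is to mirror the proof of Lemma~C1: first I would make the dual regularizer $\dualpenalty$ explicit, so that the event inside the probability becomes a statement about a maximum of coordinatewise partial sums; then I would apply a union bound over the $b_1$ coordinates; and finally I would control each coordinate with a Chernoff bound. The only genuinely new point compared with Lemma~C1 is that here the summands are scaled, centered Bernoulli rather than Gaussian variables, so the Chernoff step needs a moment generating function estimate sharp enough to produce the variance factor $\max_i\{p^i(1-p^i)\}$ instead of the cruder range-only value $1/4$.

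\textbf{Step~1 (dual and union bound).} Since $\mypenalty(\matrix)=\sum_{j=1}^{b_1}\abs{\matrix_j}$ is the $\ell_1$-norm on $\R^{b_1}$, its dual $\dualpenalty$ is the $\ell_\infty$-norm. Writing $W_j:=\sum_{i=1}^{n}\big(\lmoutputi-p^i\big)z_j^i$ for the $j$-th coordinate of $\sum_{i=1}^{n}\big(\lmoutputi-p^i\big)\lmcovariatesi$, we get $\dualpenalty\big(\sum_{i=1}^{n}(\lmoutputi-p^i)\lmcovariatesi\big)=\max_{j\in\{1,\dots,b_1\}}\abs{W_j}$, hence
\[
P\Big(\dualpenalty\big(\sum_{i=1}^{n}(\lmoutputi-p^i)\lmcovariatesi\big)\le\tuningparameter_0\Big)=1-P\big(\max_{j}\abs{W_j}>\tuningparameter_0\big)\ \geq\ 1-\sum_{j=1}^{b_1}P\big(\abs{W_j}>\tuningparameter_0\big).
\]
It therefore suffices to show $P(\abs{W_j}>\tuningparameter_0)\leq 2b_1^{-1}\exp(-t^2)$ for every fixed $j$.

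\textbf{Step~2 (coordinatewise Chernoff bound).} Fix $j$. The summands $(\lmoutputi-p^i)z_j^i$, $\oneton{i}$, are independent and centered, and each equals $z_j^i$ times a centered $\mathrm{Bernoulli}(p^i)$ variable with variance $(z_j^i)^2p^i(1-p^i)$. The key ingredient is the sharp moment generating function bound: for $\lmoutput\sim\mathrm{Bernoulli}(p)$ and all $a\in\R$,
\[
\log\E\exp\!\big(a(\lmoutput-p)\big)\ \leq\ \tfrac{a^2}{12}\big(1+2p(1-p)\big),
\]
which refines Hoeffding's lemma (whose constant $\tfrac18=\tfrac1{12}(1+2\cdot\tfrac14)$ ignores the variance) and sits among the refined Bennett--Hoeffding-type inequalities in \citet{bercu2015concentration}. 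Applying it with $a=\lambda z_j^i$, using independence, the normalization $\sum_{i=1}^{n}(z_j^i)^2=n$, and $p^i(1-p^i)\leq v:=\max_i\{p^i(1-p^i)\}$, gives $\log\E\exp(\lambda W_j)\leq\tfrac{\lambda^2}{12}(1+2v)n$ for every $\lambda\in\R$. A Chernoff bound, optimized at $\lambda=6\tuningparameter_0/((1+2v)n)$, then yields $P(W_j>\tuningparameter_0)\leq\exp\!\big(-3\tuningparameter_0^2/((1+2v)n)\big)$, and the identical bound holds for $-W_j$. Substituting $\tuningparameter_0^2=\tfrac{1+2v}{3}n(t^2+\log b_1)$ makes the exponent $-(t^2+\log b_1)$, so $P(\abs{W_j}>\tuningparameter_0)\leq2\exp(-(t^2+\log b_1))=2b_1^{-1}\exp(-t^2)$; together with Step~1 this completes the proof.

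\textbf{Where the difficulty lies.} Step~1 and the Chernoff optimization are routine; the step I expect to require care is the Bernoulli moment generating function inequality, because it is precisely what upgrades the constant from the range-only (Hoeffding) value to the variance-adaptive one appearing in $\tuningparameter_0$. If one prefers not to invoke \citet{bercu2015concentration}, I would prove it directly: set $g(a):=\log\big(1+p(e^a-1)\big)-ap$, observe $g(0)=g'(0)=0$ and $g''(a)=u_a(1-u_a)$ with the tilted probability $u_a:=pe^a/(1+p(e^a-1))\in(0,1)$ (so $u_0=p$ and $g''(0)=p(1-p)$), and then bound $g(a)=\int_0^a(a-s)\,u_s(1-u_s)\,\mathrm{d}s$ by $\tfrac{a^2}{12}(1+2p(1-p))$ via a short one-dimensional estimate on the behaviour of $u_s$. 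Everything else is bookkeeping with the dual norm and the union bound.
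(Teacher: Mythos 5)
Your proposal is correct and takes essentially the same route as the paper: the identical dual-norm reduction and union bound over the $b_1$ coordinates, followed by a variance-refined Hoeffding-type tail bound with the same quantity $n+2\sum_{i=1}^{n}(z_j^i)^2p^i(1-p^i)$ in the exponent and the same reference \citet{bercu2015concentration}. The only cosmetic difference is that the paper applies the improved Hoeffding inequality (Theorem~2.47 there) directly to the weighted sum $S_n=\sum_{i=1}^{n}\lmoutputi z_j^i$ of bounded summands, whereas you re-derive its Chernoff core from a per-summand Bernoulli moment generating function bound and optimize by hand, arriving at the identical exponent $-3\tuningparameter_0^2/\big(n+2n\max_i\{p^i(1-p^i)\}\big)$ and hence the same conclusion.
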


\begin{proof}[of Lemma~\normalfont{C2}] The proof consists of two steps. First, we plug in $\dualpenalty(\cdot)$ and re-express the target probability in terms of random variables $\sum_{i=1}^{n}\big(\lmoutputi-p^i\big) z_j^i$. Second, we apply the improved Hoeffding's inequality~\citep[Theorem 2.47]{bercu2015concentration} to bound the tail probabilities. \\
	
{\it Step~1.} For $\matrix \in \R^{b_1}$, the dual of the regularizer is of the form
	\begin{equation*}
	\dualpenalty(\matrix) := \sup\big\{  \inprod{\matrix}{\matrix'}  \mid \matrix' \in \R^{b_1}, \mypenalty(\matrix')  \leq  1 \big\} = \max_{j \in \{1, \dots, b_1\}}\abs{\matrix_{j}}. 
	\end{equation*}
	Therefore,
	\begin{align*}
	P\big(\dualpenalty\big(\sum_{i=1}^{n}\big(\lmoutputi-p^i\big) \lmcovariatesi \big)\leq \tuningparameter_0\big) 
	&=P(\max_{j\in \{1, \dots, b_1\}}\abs{\sum_{i=1}^{n}\big(\lmoutputi-p^i\big) z_j^i}\leq \tuningparameter_0)\\
	& = 1- P(\max_{j\in \{1, \dots, b_1\}}\abs{\sum_{i=1}^{n}\big(\lmoutputi-p^i\big) z_j^i}> \tuningparameter_0) \\
	& \geq 1- \sum_{j=1}^{b_1}P(\abs{\sum_{i=1}^{n}\big(\lmoutputi-p^i\big) z_j^i}> \tuningparameter_0).
	\end{align*}
	
{\it Step~2.} We now bound the tail probability of $\sum_{i=1}^{n}\big(\lmoutputi-p^i\big) z_j^i$. Let $U_i := \lmoutputi z_j^i$ and $S_n := \sum_{i=1}^{n}U_i$, we observe that
\begin{equation*}
	\sum_{i=1}^{n}\big(\lmoutputi-p^i\big) z_j^i = S_n - \E_{\matrixtrue}S_n,
\end{equation*}
where $\E_{\matrixtrue}S_n$ is the conditional expectation given $z_j^i$. The random variable $U_i $ is bounded, specifically, $\min\{0, z_j^i\} \leq U_i\leq \max\{0, z_j^i\}$. Applying the improved Hoeffding's inequality~\citep[Theorem 2.47]{bercu2015concentration} yields
	\begin{equation*}
	P(\sum_{i=1}^{n}\big(\lmoutputi-p^i\big) z_j^i> \tuningparameter_0)  = P(S_n-\E_{\matrix^*}S_n > \tuningparameter_0)
	\leq \exp\big(-\frac{3\tuningparameter_0^2}{D_n + 2V_n}\big),
	\end{equation*}
	where $D_n = \sum_{i=1}^{n}\big( \max\{0, z_j^i\} - \min\{0, z_j^i\} \big)^2 = \sum_{i=1}^{n}(z_j^i)^2 = n$, and $V_n = \E_{\matrixtrue}(S_n-\E_{\matrix^*}S_n)^2 = \sum_{i=1}^{n}(z_j^i)^2p^i(1-p^i)$. Similarly, we find
	\begin{equation*}
	P(\sum_{i=1}^{n}\big(\lmoutputi-p^i\big) z_j^i < - \tuningparameter_0)  = P(-S_n-\E_{\matrix^*}(-S_n) > \tuningparameter_0)
	\leq \exp\big(-\frac{3\tuningparameter_0^2}{D_n + 2V_n}\big).
	\end{equation*}
We are left with bounding the tail probabilities. To this end, we observe that
	\begin{align*}
	\sum_{j=1}^{b_1}P(\abs{\sum_{i=1}^{n}\big(\lmoutputi-p^i\big) z_j^i}> \tuningparameter_0) & \leq  2b_1\exp\big(-\frac{3\tuningparameter_0^2}{D_n + 2V_n}\big)\\
	& = 2b_1\exp\big(-\frac{3\tuningparameter_0^2}{n + 2\sum_{i=1}^{n}(z_j^i)^2p^i(1-p^i)}\big)\\
	& \leq 2b_1\exp\big(-\frac{3\tuningparameter_0^2}{n + 2n\max_i\{p^i(1-p^i)\}}\big).
	\end{align*}
	Plugging in $\tuningparameter_0$ yields
	\begin{align*}
	& \sum_{j=1}^{b_1}P(\abs{\sum_{i=1}^{n}\big(\lmoutputi-p^i\big) z_j^i}> \tuningparameter_0) \\
	&\leq  2b_1\exp\big(-\frac{(1+2\max_i\{p^i(1-p^i)\})\cdot n(t^2+\log b_1)}{n + 2n\max_i\{p^i(1-p^i)\}}\big)\\
	& = 2b_1\exp\big(-t^2 -\log b_1\big)\\
	& = 2\exp\big(-t^2\big).
	\end{align*}
We conclude that $P\big(\dualpenalty\big(\sum_{i=1}^{n}\big(\lmoutputi-p^i\big) \lmcovariatesi \big)\leq \tuningparameter_0\big) \geq 1-2\exp(-t^2)$ as desired.
\end{proof}

\begin{lemma}[control of the empirical term for gaussian graphical model]\label{EP3}
	Consider $\mypenalty(\matrix) := \sum_{i_1=1}^{p}\sum_{i_2=1}^{p}\abs{\matrix_{i_1i_2}}$ and $\data^i \sim \mathcal{N}(0, (\matrixtrue)^{-1})$. For $0 <t <\sqrt{n/4-\log\big(p(p-1)\big)}$, and $\tuningparameter_0 = 80\max_k\big((\matrixtrue)^{-1}\big)_{kk}\sqrt{n\big(t^2 + \log\big(p(p-1)\big)\big)}$, it holds that 
	\begin{equation*}
	P\big(\dualpenalty\big(\sum_{i=1}^{n}\big((\matrixtrue)^{-1} - \data^i(\data^i)^\top\big)\big)\leq \tuningparameter_0\big) \geq 1-4\exp(-t^2).
	\end{equation*}
\end{lemma}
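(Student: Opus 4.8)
The plan is to reduce the noise term of Theorem~1 for the graphical lasso to a maximum of $p^2$ centered sums of sub-exponential random variables, and then apply a Bernstein-type concentration inequality together with a union bound. First I would unpack the dual regularizer: since $\mypenalty(\matrix)=\sum_{i_1=1}^{p}\sum_{i_2=1}^{p}\abs{\matrix_{i_1i_2}}$ is the vectorized $\ell_1$-norm, its dual $\dualpenalty$ is the vectorized $\ell_\infty$-norm, so that
\begin{equation*}
\dualpenalty\Big(\si\big((\matrixtrue)^{-1}-\data^i(\data^i)^\top\big)\Big)=\max_{k,l\in\{1,\dots,p\}}\Big|\si\big(\data^i_k\data^i_l-((\matrixtrue)^{-1})_{kl}\big)\Big|,
\end{equation*}
where I used $\E_{\matrixtrue}\data^i_k\data^i_l=((\matrixtrue)^{-1})_{kl}$ so that each inner sum is centered. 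I would then split this maximum into the $p$ diagonal pairs $k=l$ and the $p(p-1)$ off-diagonal pairs $k\neq l$, treating the two cases with the same kind of estimate but with separate ``probability budgets''.

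Next, fix a pair $(k,l)$ and write $\Sigma:=(\matrixtrue)^{-1}$. The summands $\data^i_k\data^i_l-\Sigma_{kl}$, $i=1,\dots,n$, are i.i.d., centered, and sub-exponential: $(\data^i_k,\data^i_l)$ is bivariate Gaussian with variances at most $\max_m\Sigma_{mm}$ and with $|\Sigma_{kl}|\le\sqrt{\Sigma_{kk}\Sigma_{ll}}\le\max_m\Sigma_{mm}$, so the polarization identity $\data^i_k\data^i_l=\tfrac14\big((\data^i_k+\data^i_l)^2-(\data^i_k-\data^i_l)^2\big)$ exhibits each summand as a difference of centered, scaled chi-square variables whose sub-exponential parameters are $O(\max_m\Sigma_{mm})$; the diagonal terms $(\data^i_k)^2-\Sigma_{kk}$ are handled the same way. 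A Bernstein inequality for independent sub-exponential variables~\citep{bercu2015concentration} then gives, for a universal constant $c>0$,
\begin{equation*}
P\Big(\Big|\si\big(\data^i_k\data^i_l-\Sigma_{kl}\big)\Big|>u\Big)\le 2\exp\Big(-c\,\min\Big\{\tfrac{u^2}{n(\max_m\Sigma_{mm})^2},\,\tfrac{u}{\max_m\Sigma_{mm}}\Big\}\Big).
\end{equation*}

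Finally I would substitute $u=\tuningparameter_0=80\max_m\Sigma_{mm}\sqrt{n\big(t^2+\log(p(p-1))\big)}$. The restriction $0<t<\sqrt{n/4-\log(p(p-1))}$ is exactly what keeps $\tuningparameter_0$ small enough relative to $n\max_m\Sigma_{mm}$ that the quadratic branch of the Bernstein bound is the active one, so that, once $80$ is taken large enough relative to $c$, each off-diagonal tail is at most $2\exp\big(-t^2-\log(p(p-1))\big)$ and each diagonal tail at most $2\exp\big(-t^2-\log p\big)$. A union bound over the $p(p-1)$ off-diagonal pairs then contributes at most $2\exp(-t^2)$, the union bound over the $p$ diagonal pairs contributes at most another $2\exp(-t^2)$, and adding these yields the claimed $4\exp(-t^2)$. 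I expect the main obstacle to be the bookkeeping of explicit constants: pinning down the sub-exponential parameters of $\data^i_k\data^i_l-\Sigma_{kl}$ precisely in terms of $\max_m\Sigma_{mm}$, propagating them through the Bernstein bound, and verifying that the stated range of $t$ indeed keeps the deviation in the sub-Gaussian regime --- all tightly enough to justify the factor $80$ here (equivalently the factor $160$ in the Kullback--Leibler bound of Section~3.3).
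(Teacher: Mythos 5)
Your reduction is the same as the paper's Step~1: the dual of the vectorized $\ell_1$-norm is the entrywise maximum, each entry of $\si\big((\matrixtrue)^{-1}-\data^i(\data^i)^\top\big)$ is a centered sum of Gaussian products, and a union bound over the $O(p^2)$ entries plus the observation that the stated range of $t$ is exactly equivalent to $\tuningparameter_0/n<40\max_k\big((\matrixtrue)^{-1}\big)_{kk}$ finishes the argument. (Your bookkeeping of diagonal versus off-diagonal pairs is fine, and in fact more careful than the paper's union bound, which sums only over $i_2\neq i_1$; in the paper the final factor $4$ comes not from such a split but from the constant $4$ in the entrywise tail bound it cites.)

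The genuine gap is in your Step~2. The lemma is not an up-to-constants statement: it asserts the specific constant $80$, the specific admissible range $0<t<\sqrt{n/4-\log\big(p(p-1)\big)}$, and the probability $1-4\exp(-t^2)$. Your plan derives a Bernstein bound with an unspecified universal constant $c$ and then says the conclusion follows ``once $80$ is taken large enough relative to $c$''; but $80$ is fixed by the statement, so this step is not available --- you would have to carry out the explicit sub-exponential parameter computation for $\data^i_k\data^i_l-\big((\matrixtrue)^{-1}\big)_{kl}$ and verify that it yields a two-sided tail at least as strong as $4\exp\big(-n\delta^2/\big(80^2\max_k((\matrixtrue)^{-1})_{kk}^2\big)\big)$ for all $\delta\in\big(0,\,40\max_k((\matrixtrue)^{-1})_{kk}\big)$, which is precisely the content you defer. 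This quantitative ingredient is what the paper imports wholesale as Lemma~1 of \citet{Ravikumar2011high}; your polarization-plus-Bernstein route is essentially a re-derivation of that lemma, so the fix is either to cite it as the paper does or to complete the constant chase explicitly rather than gesture at it.
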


\begin{proof}[of Lemma~\normalfont{C3}] The proof consists of two steps. First, we plug in $\dualpenalty(\cdot)$ and re-express the target probability in terms of random variables $\big(\widehat{\matrix}^{-1} - (\matrixtrue)^{-1}\big)_{i_1i_2}$, where $\widehat{\matrix}^{-1} := \sum_{i=1}^{n}\data^i(\data^i)^\top/n$ is the sample covariance matrix. Second, we apply the concentration result for sample covariance matrix~\citep[Lemma 1]{Ravikumar2011high} to bound the tail probabilities. \\
	
{\it Step~1.} For $\matrix \in \R^{p\times p}$, the dual of the regularizer is of the form
	\begin{equation*}
	\dualpenalty(\matrix) := \sup\big\{  \inprod{\matrix}{\matrix'}  \mid \matrix' \in \R^{p\times p}, \mypenalty(\matrix')  \leq  1 \big\} = \max_{i_1, i_2}\abs{\matrix_{i_1i_2}}. 
	\end{equation*}
	Therefore, 
	\begin{align*}
	&P\big(\dualpenalty\big(\sum_{i=1}^{n}\big((\matrixtrue)^{-1} - \data^i(\data^i)^\top\big)\big)\leq \tuningparameter_0\big) \\
	&=P\big(\max_{i_1, i_2}\abs{n\big((\matrixtrue)^{-1}-\frac{1}{n}\sum_{i=1}^{n}\data^i(\data^i)^\top\big)_{i_1 i_2}}\leq \tuningparameter_0\big)\\
	& = P\big(\max_{i_1, i_2}\abs{\big((\matrixtrue)^{-1}-\frac{1}{n}\sum_{i=1}^{n}\data^i(\data^i)^\top\big)_{i_1 i_2}}\leq \frac{\tuningparameter_0}{n}\big)\\
	& = 1- P\big(\max_{i_1, i_2}\abs{\big((\matrixtrue)^{-1}-\frac{1}{n}\sum_{i=1}^{n}\data^i(\data^i)^\top\big)_{i_1 i_2}}> \frac{\tuningparameter_0}{n}\big) \\
	& \geq 1-\sum_{i_1=1}^{p}\sum_{i_2=1, i_2 \neq i_1}^{p}P(\abs{(\matrixtrue)^{-1}-\frac{1}{n}\sum_{i=1}^{n}\data^i(\data^i)^\top\big)_{i_1 i_2}}> \frac{\tuningparameter_0}{n}).
	\end{align*}
%	\begin{equation*}
%	\dualpenalty\big(\sum_{i=1}^{n}(\E_{\matrixtrue}T(\data^i)-T(\data^i))\big) = \max_{i_1, i_2}\abs{\sum_{i=1}^{n}(\E_{\matrixtrue}T(\data^i)-T(\data^i))_{i_1, i_2}}.
%	\end{equation*}
%	For $\data^i\sim \mathcal{N}(0, (\matrixtrue)^{-1})$, the density of $\data^i$ is in the form of 
%	\begin{equation*}
%	\density{\matrixtrue}(\datav) = e^{-\inprod{\matrixtrue}{\datav\datav^\top}+\log\determinant{\matrix}}.
%	\end{equation*}
%	We find $T(\datav) = \datav\datav^\top$ and
%	\begin{equation*}
%	\max_{i_1, i_2}\abs{\sum_{i=1}^{n}(\E_{\matrixtrue}T(\data^i)-T(\data^i))_{i_1, i_2}} = \max_{i_1, i_2}\abs{\sum_{i=1}^{n}\big(\E_{\matrixtrue}\data^i(\data^i)^\top - \data^i(\data^i)^\top\big)_{i_1, i_2}}.
%	\end{equation*}
%	Since $\E_{\matrixtrue}\data^i(\data^i)^\top = (\matrixtrue)^{-1}$, the above term can be further simplified as 
%	\begin{equation*}
%	\max_{i_1, i_2}\abs{\sum_{i=1}^{n}\big(\E_{\matrixtrue}\data^i(\data^i)^\top - \data^i(\data^i)^\top\big)_{i_1, i_2} }= \max_{i_1, i_2}\abs{n\big((\matrixtrue)^{-1}-\frac{1}{n}\sum_{i=1}^{n}\data^i(\data^i)^\top\big)_{i_1, i_2}}.
%	\end{equation*}
	{\it Step~2.} \citet[Lemma 1]{Ravikumar2011high} gives the tail bound of sample covariance matrix as
	\begin{equation*}
	P(\abs{\big(\widehat{\matrix}^{-1} - (\matrixtrue)^{-1}\big)_{i_1i_2}}> \delta) \leq 4\exp\big(-\frac{n\delta^2}{80^2\max_k\big((\matrixtrue)^{-1}\big)_{kk}^2}\big),
	\end{equation*}
	for all $\delta \in  (0, 40\max_k\big((\matrixtrue)^{-1}\big)_{kk})$.
	We are now ready to work on the desired tail bound.
	\begin{equation*}
	\sum_{i_1=1}^{p}\sum_{i_2=1, i_2 \neq i_1}^{p}P(\abs{(\matrixtrue)^{-1}-\frac{1}{n}\sum_{i=1}^{n}\data^i(\data^i)^\top\big)_{i_1 i_2}}> \frac{\tuningparameter_0}{n})
	\leq 4p(p-1)\exp(-\frac{n(\tuningparameter_0/n)^2}{80^2\max_k\big((\matrixtrue)^{-1}\big)_{kk}^2}).
	\end{equation*}
	Plugging in $\tuningparameter_0$ yields
	\begin{align*}
	&\sum_{i_1=1}^{p}\sum_{i_2=1,i_2 \neq i_1}^{p}P(\abs{(\matrixtrue)^{-1}-\frac{1}{n}\sum_{i=1}^{n}\data^i(\data^i)^\top\big)_{i_1, i_2}}> \frac{\tuningparameter_0}{n})\\
	& \leq 4p(p-1)\exp\big(-\frac{n\big(80\max_k\big((\matrixtrue)^{-1}\big)_{kk}\sqrt{n(t^2 + \log\big(p(p-1))\big)}/n\big)^2}{80^2\max_k\big((\matrixtrue)^{-1}\big)_{kk}^2}\big)\\
	& =  1- p(p-1)\cdot 4\exp\big(-t^2 - \log\big(p(p-1)\big)\big) \\
	& = 1-4\exp\big(-t^2\big)
	\end{align*}
	for all $0 < \tuningparameter_0/n < 40\max_k\big((\matrixtrue)^{-1}\big)_{kk}$.
	It can now be readily shown that the desired bound holds for all $0 <t <\sqrt{n/4-\log\big(p(p-1)\big)}$. 
\end{proof}

\appendixfour

%%% Local Variables: 
%%% mode: latex
%%% TeX-master: t
%%% End: 

\section{Notation and properties of tensor operations}
\label{sec: notation}
We follow the notation for tensor and tensor operations  in~\cite{Kolda2006multilinear,Kolda2009tensor}. A tensor $\mathcal{T} \in \R^{\dimension_1 \times \dots \times \dimension_\tensordimension}$ can simply be seen as a multi-dimensional array ($\mathcal{T}_{i_1, \dots ,i_p}  : i_k\in\{1, \dots \dimension_k\}; k \in\{1, \dots, p\} $). The mode-$k$ fibers of $\mathcal{T}$ are vectors obtained by fixing all indices except the $k$th one; for example, $\mathcal{T}_{i_1, \dots,i_{k-1},:, i_{k+1},\dots, i_p} \in \R^{\dimension_k}$. The $k$th mode matricization of $\mathcal{T}$ is the matrix having the mode-$k$ fibers of $\mathcal{T}$ as columns and is represented by $T_{(k)} \in \R^{\dimension_k \times \big(\dimension_1 \dots \dimension_{k-1}\dimension_{k+1}\dots \dimension_p\big)}$. The mode-$k$ product of tensor~$\mathcal{T}$ and a matrix $C \in \R^{m\times \dimension_k}$ is a tensor defined by $\mathcal{Y} = \mathcal{T} \tensorproduct{k} C \in \R^{\dimension_1 \times \dots \times \dimension_{k-1} \times m \times \dimension_{k+1} \times \dots \times \dimension_p}$. The resulting array $\mathcal{Y}$ is from the inversion of the $k$th mode matricization operation on the matrix $C T_{(k)}$, that is, $Y_{(k)} = C T_{(k)}$. The entries of $\mathcal{Y}$ are given by
\begin{equation*}
	\big( \mathcal{T} \tensorproduct{k} C \big)_{i_1,\dots, i_{k-1}, j,i_{k+1},\dots, i_p}= \sum_{i_k=1}^{\dimension_k}T_{i_1,\dots, i_{k-1}, i_k,i_{k+1}, \dots, i_p}c_{j i_k}
\end{equation*}
for $j \in \{1, \dots, m\}, k \in \{2, \dots, p-1\}, i_l\in \{1, \dots, \dimension_l\}, l \in\{1,\dots,k-1, k+1, \dots, p\}$ (the case of $k \in \{1, p\}$ is similar). 
The Tucker product is an extension of the mode-$k$ product, which is the product of a tensor $\mathcal{T}$ and a list of matrices $E= \{ E_1, \dots, E_p \}$ in which $E_k \in \R^{m_k \times \dimension_k}$. The $(i,j)$th element of $E_k$ is denoted by $\element{k}{ij}$. The tensor product is given by
\begin{equation*}
	\mathcal{T} \times E = \mathcal{T} \tensorproduct{1} E_1 \tensorproduct{2} \dots \tensorproduct{p}E_p,
\end{equation*}
or, elementwise, 
\begin{equation*}
	\big( \mathcal{T} \times E \big)_{j_1,\dots, j_p}=\tensorentrysum T_{i_1, \dots, i_p}\element{1}{j_1i_1}\dots \element{p}{j_pi_p}
\end{equation*}
for $j_k\in\{1, \dots, m_k\}, k\in\{1, \dots, p\}$. 
The inner product of two same-sized tensors $\mathcal{W}, \mathcal{V} \in \R^{\dimension_1\times \dots\times \dimension_p}$ is the sum of the products of same-index elements, that is,
\begin{equation*}
	\inprod{\mathcal{W}}{\mathcal{V}} = \tensorentrysum w_{\tensorentryindex} v_{\tensorentryindex}.
\end{equation*}
The array norm of tensor $\mathcal{T}$ is the inner product of itself and given by $\norm{\mathcal{T}}^2 = \inprod{\mathcal{T}}{\mathcal{T}} = \sum_{i_1}\dots\sum_{i_p}t_{i_1, \dots, i_p}^2$.

\begin{lemma}
	\label{remark: distributive}
	Let tensors $\mathcal{W}, \mathcal{V} \in \R^{\dimension_1 \times  \dots \times \dimension_p}$, list of matrices $E=\{ E_1, \dots, E_p \}$ in which $E_k \in \R^{m_k \times \dimension_k}, k\in\{1, \dots, p\}$, it holds that
	\begin{equation*}
	\big( \mathcal{W} +\mathcal{V} \big) \times E = \mathcal{W}\times E +\mathcal{V} \times E.
	\end{equation*}
\end{lemma}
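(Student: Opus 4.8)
The plan is to verify the identity entrywise, using the explicit elementwise description of the Tucker product recorded above. First I would fix an arbitrary output multi-index $(j_1,\dots,j_\tensordimension)$ with $j_k\in\{1,\dots,m_k\}$ and write
\[
\big((\mathcal{W}+\mathcal{V})\times E\big)_{j_1,\dots,j_\tensordimension}=\tensorentrysum (\mathcal{W}+\mathcal{V})_{i_1,\dots,i_\tensordimension}\,\element{1}{j_1i_1}\cdots\element{\tensordimension}{j_\tensordimension i_\tensordimension}.
\]
Since tensor addition is itself defined entrywise, $(\mathcal{W}+\mathcal{V})_{i_1,\dots,i_\tensordimension}=w_{i_1,\dots,i_\tensordimension}+v_{i_1,\dots,i_\tensordimension}$, so the summand splits.

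Second, I would invoke distributivity of multiplication over addition in $\R$ together with the fact that a finite sum of a sum is the sum of the two finite sums, obtaining
\[
\big((\mathcal{W}+\mathcal{V})\times E\big)_{j_1,\dots,j_\tensordimension}=\tensorentrysum w_{i_1,\dots,i_\tensordimension}\element{1}{j_1i_1}\cdots\element{\tensordimension}{j_\tensordimension i_\tensordimension}+\tensorentrysum v_{i_1,\dots,i_\tensordimension}\element{1}{j_1i_1}\cdots\element{\tensordimension}{j_\tensordimension i_\tensordimension}=\big(\mathcal{W}\times E\big)_{j_1,\dots,j_\tensordimension}+\big(\mathcal{V}\times E\big)_{j_1,\dots,j_\tensordimension}.
\]
Because the multi-index $(j_1,\dots,j_\tensordimension)$ was arbitrary and all three tensors $(\mathcal{W}+\mathcal{V})\times E$, $\mathcal{W}\times E$, and $\mathcal{V}\times E$ lie in $\R^{m_1\times\cdots\times m_\tensordimension}$, agreement of every entry yields the claimed tensor identity. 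An equivalent route, if one prefers not to display the full $\tensordimension$-fold sum, is to induct on the number of mode products in $\mathcal{T}\times E=\mathcal{T}\tensorproduct{1}E_1\tensorproduct{2}\cdots\tensorproduct{\tensordimension}E_\tensordimension$: it suffices to establish the single-mode case $(\mathcal{W}+\mathcal{V})\tensorproduct{k}C=\mathcal{W}\tensorproduct{k}C+\mathcal{V}\tensorproduct{k}C$, which is immediate from the defining matricization relation $Y_{(k)}=CT_{(k)}$ and linearity of matrix multiplication, and then compose the $\tensordimension$ modes.

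I do not expect a genuine obstacle here; the statement is pure bookkeeping resting on distributivity of finite sums. The only points requiring a little care are keeping the index ranges straight --- the output modes have sizes $m_k$ while the summation indices $i_k$ range over $\{1,\dots,\dimension_k\}$ --- and making explicit that tensor addition is taken entrywise, so that the split of the summand is legitimate.
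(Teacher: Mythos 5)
Your entrywise argument is correct and is essentially the paper's own proof: both expand the Tucker product coordinatewise and split the finite sum by distributivity. The alternative single-mode-plus-composition route you sketch would also work, but it is not needed here.
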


\begin{proof}[of Lemma~\normalfont{D1}]
	This lemma can be proved by writing out the Tucker product on the left-hand side element-wise. For $k \in\{1, \dots p\} $, and $j_k \in\{1, \dots, m_k\} $, 
	\begin{align*}
		\Big( \big( \mathcal{W} + \mathcal{V} \big)\times E \Big)_{j_1, \dots, j_p} =& \tensorentrysum\big( w+v \big)_{i_1, \dots, i_p}\element{1}{j_1 i_1} \dots \element{p}{j_p i_p} \\
		=& \big(\tensorentrysum w_{i_1, \dots, i_p} \element{1}{j_1 i_1} \dots \element{p}{j_p i_p}\big) +\\
		 &\big( \tensorentrysum v_{i_1, \dots, i_p}\element{1}{j_1 i_1} \dots \element{p}{j_p i_p} \big)\\
		=& \big(\mathcal{W} \times E\big)_{j_1, \dots, j_p} + \big(\mathcal{V} \times E\big)_{j_1, \dots, j_p}. 
	\end{align*} 
	This is equivalent to the relationship $\big( \mathcal{W} +\mathcal{V} \big) \times E = \mathcal{W}\times E +\mathcal{V} \times E$.
\end{proof}

\begin{lemma}
	\label{remark: array normal} For two same-sized tensors $\mathcal{W}, \mathcal{V} \in \R^{\dimension_1 \times \dots \times \dimension_p}$,
	\begin{equation*}
		\arraynorm{\mathcal{W}+\mathcal{V}}=\arraynorm{\mathcal{W}}+\arraynorm{\mathcal{V}}+2\inprod{\mathcal{W}}{\mathcal{V}}.
	\end{equation*}
%	$\arraynorm{\mathcal{W}+\mathcal{V}}=\arraynorm{\mathcal{W}}+\arraynorm{\mathcal{V}}+2\inprod{\mathcal{W}}{\mathcal{V}}$.
\end{lemma}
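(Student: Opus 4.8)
The plan is to reduce the identity to the elementary expansion $(a+b)^2 = a^2 + 2ab + b^2$ applied entrywise, together with the definitions of the array norm and the tensor inner product recorded in Appendix~D. Recall from there that $\arraynorm{\mathcal{T}} = \inprod{\mathcal{T}}{\mathcal{T}} = \tensorentrysum t_{\tensorentryindex}^2$, that $\inprod{\mathcal{W}}{\mathcal{V}} = \tensorentrysum w_{\tensorentryindex} v_{\tensorentryindex}$, and that tensor addition is componentwise, so that $(\mathcal{W}+\mathcal{V})_{\tensorentryindex} = w_{\tensorentryindex} + v_{\tensorentryindex}$.

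First I would substitute the componentwise form of $\mathcal{W}+\mathcal{V}$ into the definition of the array norm and expand the square inside the sum, obtaining
\begin{equation*}
\arraynorm{\mathcal{W}+\mathcal{V}} = \tensorentrysum \big( w_{\tensorentryindex} + v_{\tensorentryindex} \big)^2 = \tensorentrysum \big( w_{\tensorentryindex}^2 + 2 w_{\tensorentryindex} v_{\tensorentryindex} + v_{\tensorentryindex}^2 \big).
\end{equation*}
Since the sum ranges over the finite index set $\{1,\dots,\dimension_1\}\times\dots\times\{1,\dots,\dimension_\tensordimension\}$, I would then split it into three sums and identify $\tensorentrysum w_{\tensorentryindex}^2 = \arraynorm{\mathcal{W}}$, $\tensorentrysum v_{\tensorentryindex}^2 = \arraynorm{\mathcal{V}}$, and $\tensorentrysum w_{\tensorentryindex} v_{\tensorentryindex} = \inprod{\mathcal{W}}{\mathcal{V}}$, which gives the claimed identity.

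There is essentially no obstacle here: the only points that deserve a word are that tensor addition is entrywise (hence squaring the sum of entries is legitimate) and that the index sum is finite (hence splitting it into three sums is valid). An equivalent, coordinate-free route is to note that the tensor inner product is a symmetric bilinear form on the finite-dimensional real vector space $\R^{\dimension_1\times\dots\times\dimension_\tensordimension}$ and to expand $\inprod{\mathcal{W}+\mathcal{V}}{\mathcal{W}+\mathcal{V}}$ by bilinearity, using $\inprod{\mathcal{W}}{\mathcal{V}} = \inprod{\mathcal{V}}{\mathcal{W}}$; either way the computation is a single line. This identity is exactly what is invoked in the proof of Lemma~1 to expand the array norm of $(\tensorerror)\times\Sigma^{-1/2} + (\tensorpredictaformula)\times\Sigma^{-1/2}$.
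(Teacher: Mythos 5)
Your proof is correct and follows essentially the same route as the paper's: expand $\arraynorm{\mathcal{W}+\mathcal{V}}$ entrywise via the definition of the array norm, apply $(a+b)^2=a^2+2ab+b^2$ termwise, and split the finite sum to identify $\arraynorm{\mathcal{W}}$, $\arraynorm{\mathcal{V}}$, and $2\inprod{\mathcal{W}}{\mathcal{V}}$. The coordinate-free bilinearity remark is a fine alternative but adds nothing beyond the paper's argument.
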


\begin{proof}[of Lemma~\normalfont{D2}]
	By the definition of array norm,
	\begin{align*}
	\arraynorm{\mathcal{W}+\mathcal{V}} =& \tensorentrysum	\big( \mathcal{W}+\mathcal{V} \big)_{\tensorentryindex}^2\\
	=& \tensorentrysum\big( w_{\tensorentryindex}+v_{\tensorentryindex} \big)^2\\
%	=& \tensorentrysum\big( w_{\tensorentryindex}^2+v_{\tensorentryindex}^2+2w_{\tensorentryindex}v_{\tensorentryindex} \big)\\
	=& \tensorentrysum w_{\tensorentryindex}^2+\tensorentrysum v_{\tensorentryindex}^2+2\tensorentrysum w_{\tensorentryindex}v_{\tensorentryindex}.
	\end{align*}
	That is, $\arraynorm{\mathcal{W}+\mathcal{V}}=\arraynorm{\mathcal{W}}+\arraynorm{\mathcal{V}}+2\inprod{\mathcal{W}}{\mathcal{V}}$.
\end{proof}

\end{document}